\documentclass{article}

\usepackage{microtype}
\usepackage{graphicx}
\usepackage{subcaption}
\usepackage{booktabs} 

\usepackage{hyperref}

\usepackage[preprint]{icml2026}

\usepackage{amsmath}
\usepackage{amssymb}
\usepackage{mathtools}
\usepackage{amsthm}

\usepackage[capitalize,noabbrev]{cleveref}
\usepackage{xcolor}
\definecolor{Forest_green}{rgb}{0.0, 0.5, 0.0}
\newcommand{\gd}[1]{\textcolor{Forest_green}{$\uparrow$#1}}

\theoremstyle{plain}
\newtheorem{theorem}{Theorem}[section]
\newtheorem{proposition}[theorem]{Proposition}

\theoremstyle{definition}

\theoremstyle{remark}

\usepackage{multirow}
\usepackage{multicol}

\usepackage{algorithm}
\usepackage{algorithmic}
\renewcommand{\algorithmiccomment}[1]{\hfill\# #1}
\providecommand{\RETURN}{\STATE \textbf{return}}

\usepackage[textsize=tiny]{todonotes}

\icmltitlerunning{MonoLoss: A Training Objective for Interpretable Monosemantic Representations}

\begin{document}

\twocolumn[
  \icmltitle{MonoLoss: A Training Objective for \\Interpretable Monosemantic Representations}



  \icmlsetsymbol{equal}{*}

  \begin{icmlauthorlist}
    \icmlauthor{Ali Nasiri-Sarvi}{conc}
    \icmlauthor{Anh Tien Nguyen}{stony}
    \icmlauthor{Hassan Rivaz}{conc-elec}
    \icmlauthor{Dimitris Samaras}{stony}
    \icmlauthor{Mahdi S. Hosseini}{conc,mila}
  \end{icmlauthorlist}

  \icmlaffiliation{conc}{Department of Computer Science and Software Engineering, Concordia University, Canada}
  \icmlaffiliation{stony}{Department of Computer Science, Stony Brook University, United States}
  \icmlaffiliation{conc-elec}{Department of Electrical and Computer Engineering, Concordia University, Canada}
  \icmlaffiliation{mila}{Mila--Quebec AI Institute, Canada}

  \icmlcorrespondingauthor{Mahdi S. Hosseini}{mahdi.hosseini@concordia.ca}


  \vskip 0.3in
]



\printAffiliationsAndNotice{}  

\begin{abstract}
  Sparse autoencoders (SAEs) decompose polysemantic neural representations, where neurons respond to multiple unrelated concepts, into monosemantic features that capture single, interpretable concepts. However, standard training objectives only weakly encourage this decomposition, and existing monosemanticity metrics require pairwise comparisons across all dataset samples, making them inefficient during training and evaluation. We study a recent MonoScore metric and derive a single-pass algorithm that computes exactly the same quantity, but with a cost that grows linearly, rather than quadratically, with the number of dataset images. On OpenImagesV7, we achieve up to a $1200\times$ speedup wall-clock speedup in evaluation and $159\times$ during training, while adding only $\sim$4\% per-epoch overhead. This allows us to treat MonoScore as a training signal: we introduce the Monosemanticity Loss (MonoLoss), a plug-in objective that directly rewards semantically consistent activations for learning interpretable monosemantic representations. Across SAEs trained on CLIP, SigLIP2, and pretrained ViT features, using BatchTopK, TopK, and JumpReLU SAEs, MonoLoss increases MonoScore for most latents. MonoLoss also consistently improves class purity (the fraction of a latent's activating images belonging to its dominant class) across all encoder and SAE combinations, with the largest gain raising baseline purity from 0.152 to 0.723. Used as an auxiliary regularizer during ResNet-50 and CLIP-ViT-B/32 finetuning, MonoLoss yields up to 0.6\% accuracy gains on ImageNet-1K and monosemantic activating patterns on standard benchmark datasets. The code is publicly available at \url{https://github.com/AtlasAnalyticsLab/MonoLoss}.
\end{abstract}

\section{Introduction}
\label{sec:intro}
    \begin{figure}[ht]
        \centering
        \includegraphics[width=\linewidth]{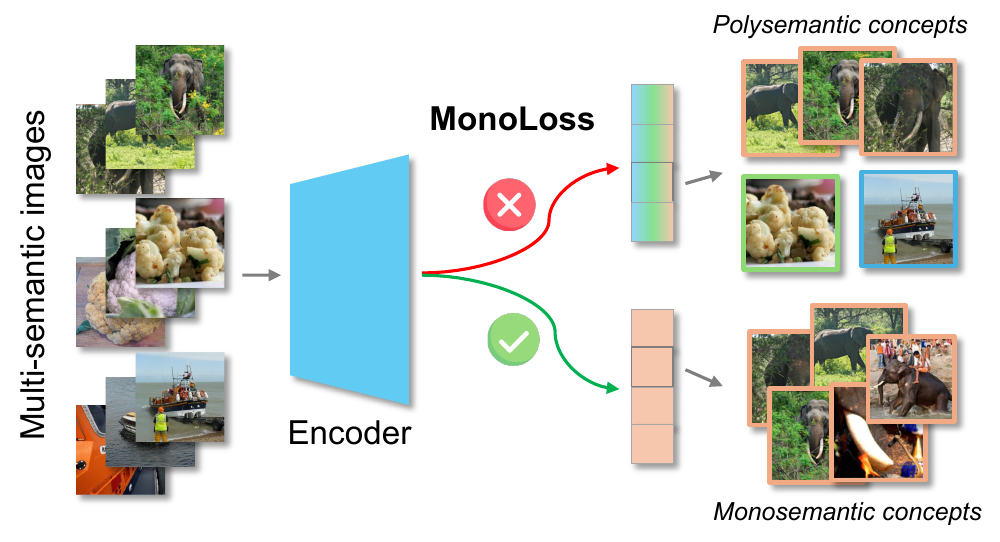}
        \caption{
        From multi-semantic images, an encoder produces latent feature dimensions. \textbf{\textcolor{red}{Without}} MonoLoss, a single dimension tends to be polysemantic, activating for disparate concepts. \textbf{\textcolor{olive}{With}} MonoLoss, dimensions consistently response to one concept, yielding coherent and interpretable monosemantic features.
        }
        \label{fig:monoloss}
    \end{figure}
    Pre-trained vision models are widely used as default feature extractors for images, yet the embedding process remains largely opaque, limiting interpretability \cite{xai1,xai2,xai3,xai4}. A key reason is polysemanticity: many units encode multiple, often unrelated concepts (e.g., human, cat, mountain) rather than a single, coherent feature \cite{olah2020zoom,kopf2025capturing,o2023disentangling}. This polysemantic behavior makes it difficult to analyze and explain how such models represent images~\cite{elhage2022toymodelssuperposition,gurnee2023finding,makelov2025towards}. 
    
    Although monosemantic representations are desirable, they are difficult to obtain under standard training regimes. Sparse Autoencoders (SAE) have recently emerged as a mechanistic interpretability approach \cite{bereska2024mechanistic,mudide2025efficient,Residual} that empirically decomposes polysemantic features into monosemantic latent units with unified semantics \cite{cunningham2023,bussmann2025matryoshka,yao2025adaptivesae}. However, common SAE objectives, typically reconstruction losses with sparsity constraints, promote monosemanticity indirectly, yielding representations that are still partially entangled. 
    Furthermore, pre-trained vision backbones are optimized primarily for downstream tasks rather than interpretability, providing no explicit constraints toward monosemanticity.
    
    Recently, the Monosemanticity Score (MonoScore)~\cite{monoscore} was introduced to quantify the monosemanticity of individual embedding dimensions. 
    It measures \textit{how similar are images that cause a particular neuron/dimension to activate}. If the unit activates for many unrelated concepts, then the corresponding score would be lower, whereas consistent activation for a single coherent concept yields a higher MonoScore.
    Despite this promise, MonoScore has been primarily used as a post-hoc evaluation metric rather than as a training signal, so it cannot shape representations during learning. Even if one wanted to use it during training, it would be prohibitively expensive since the original formulation scales quadratically with the number of images $O(N^2)$. This is particularly problematic because the reliable MonoScore evaluation requires larger evaluation sets. 
    
    Here, we reformulate MonoScore to run in linear time $O(N)$ w.r.t. the number of images, so that large-scale evaluation is efficient enough to use during training. Instead of explicitly computing all $N(N-1)/2$ pairwise similarities, we show that MonoScore can be rewritten in terms of simple per-image statistics accumulated in a single pass, 
    yielding an exact equivalent of the original score at linear cost.
    
    We further introduce MonoLoss, an objective function that directly optimizes monosemanticity. We instantiate MonoLoss in two settings: (i) SAEs, where it augments the reconstruction objective and sparsity constraints to increase monosemanticity of the interpretable latents, and (ii) fine-tuning of pre-trained vision backbones, where it serves as an auxiliary regularizer to obtain higher downstream task performance. Figure \ref{fig:monoloss} illustrates the concept coherence of activation latents obtained by MonoLoss after the optimization process. In summary, our contributions are: 
    \begin{itemize}
        \item \textbf{Linear-time MonoScore.} We convert MonoScore into a single-pass, equivalent, linear-time computation 
        with a massive speedup for large-scale dataset evaluation.
        \item \textbf{MonoLoss objective.} We introduce MonoLoss, a simple, plug-and-play objective that brings explicit  monosemantic representations into standard training.
        
        \item \textbf{SAE integration evaluation.} We integrate MonoLoss into SAE training, yielding consistent gains in monosemanticity and class purity across all encoder and SAE combinations.
        
        \item \textbf{Fine-tuning gains at near-zero cost for vision encoders.} We adopt MonoLoss to fine-tune pre-trained networks, achieving higher accuracy on ImageNet-1K, CIFAR-10, and CIFAR-100 with minimal computational overhead.
    \end{itemize}

\section{Related Works}
    \label{sec:related}
    \textbf{Sparse Autoencoders.} SAEs revisit classical dictionary learning to factor model activations into sparse linear codes whose atoms can be inspected and manipulated \cite{kreutz2003dictionary, mairal2009online, mairal2008supervised, OLSHAUSEN19973311}, and they have become a tool for mechanistic interpretability in both language and vision domains~\cite{cunningham2023,topksae, bricken2023monosemanticity}. Vanilla ReLU SAE~\cite{reluSAE}, TopK SAE~\cite{topksae}, BatchTopK SAE~\cite{batchtopksae}, and JumpReLU SAE~\cite{jumprelu} have been used to obtain monosemantic features, attribute model behavior to human-understandable concepts, and enable targeted editing or steering without architectural changes \cite{scalingsae,marks2025sparse,lieberum2024gemmascopeopensparse}. Evaluation typically considers sparsity, activation value, concept coverage, and explicit monosemanticity metrics \cite{lim2025sparse,monoscore}. Our work builds on this line by treating MonoScore not only as a diagnostic but also as an optimization target, introducing a novel loss that integrates seamlessly into training.
    
    \textbf{Monosemanticity measurement.} Monosemanticity quantifies the extent to which a latent corresponds to coherent concepts. Despite its importance for interpretability, formal metrics to measure monosemanticity remain scarce. Currently, only two prior works explicitly target this goal. In vision,~\cite{monoscore} proposes a Monosemanticity Score to evaluate SAE features, while in language,~\cite{conditional_loss} introduces the condition loss to encourage monosemanticity during training. Our work differs in both scope and methodology: we focus on the vision domain and treat monosemanticity as a training objective. 

\section{Methodology}

\subsection{Preliminary: Monosemanticity Score}

\begin{figure*}[htp]
\centering
\begin{minipage}[t]{0.48\linewidth}
\begin{algorithm}[H]
\caption{MonoScore (pairwise) - Baseline}
\label{alg:monoscore-baseline}
\begin{algorithmic}[1]
\STATE \textbf{Input:} Images $x_1,\dots,x_N$, encoder $E(\cdot)$, activations $a_{kn}$ for $k=1\ldots M$
\STATE \textbf{Output:} MonoScore $MS \in \mathbb{R}^M$
\STATE $h_n \gets E(x_n) / \|E(x_n)\|_2$ \COMMENT{$h_n \in \mathbb{R}^d$}
\STATE Min--max normalize $a_{kn}$ over $n$ to obtain $\tilde a_{kn} \in [0,1]$
\STATE Initialize $\text{num}, \text{den} \in \mathbb{R}^M$ to zero
\FOR{$n = 1$ to $N$}
  \FOR{$m = n+1$ to $N$}
    \STATE $s_{nm} \gets h_n^\top h_m$ \COMMENT{$s_{nm} \in \mathbb{R}$}
    \STATE $r_{:} \gets \tilde a_{:n} \odot \tilde a_{:m}$ \COMMENT{$r_{:} \in \mathbb{R}^M$}
    \STATE $\text{num} \gets \text{num} + r_{:} \cdot s_{nm}$ \COMMENT{$\mathbb{R}^M$}
    \STATE $\text{den} \gets \text{den} + r_{:}$ \COMMENT{$\mathbb{R}^M$}
  \ENDFOR
\ENDFOR

\STATE $MS_k \gets \begin{cases}
    \text{num}_k / \text{den}_k, & \text{if } \text{den}_k > 0 \\
    0, & \text{otherwise}
  \end{cases} \quad \forall k$
\RETURN MS  \COMMENT{$MS \in \mathbb{R}^M$}
\end{algorithmic}
\end{algorithm}
\end{minipage}\hfill
\begin{minipage}[t]{0.48\linewidth}
\begin{algorithm}[H]
\caption{MonoScore (linear-time in $N$) - Ours}
\label{alg:monoscore-linear}
\begin{algorithmic}[1]
\STATE \textbf{Input:} Images $x_1,\dots,x_N$, encoder $E(\cdot)$, activations $a_{kn}$ for $k=1\ldots M$
\STATE \textbf{Output:} MonoScore $MS \in \mathbb{R}^M$
\STATE $h_n \gets E(x_n) / \|E(x_n)\|_2$ \COMMENT{$h_n \in \mathbb{R}^d$}
\STATE Min--max normalize $a_{kn}$ over $n$ to obtain $\tilde a_{kn} \in [0,1]$
\STATE Initialize $u,v \in \mathbb{R}^M,\ w \in \mathbb{R}^{d \times M}$ to zero

\FOR{$n = 1$ to $N$}
  \STATE $u \gets u + \tilde a_{:n}$ \COMMENT{$u \in \mathbb{R}^M$}
  \STATE $v \gets v + \tilde a_{:n} \odot \tilde a_{:n}$ \COMMENT{$v \in \mathbb{R}^M$}
  \STATE $w \gets w + h_n\,\tilde a_{:n}^\top$ \COMMENT{$w \in \mathbb{R}^{d \times M}$}
\ENDFOR
\STATE $q \gets \sum_d w_{d:} \odot w_{d:}$ \COMMENT{$q \in \mathbb{R}^M$, $q_k = \|w_{:k}\|_2^2$}
\STATE $\text{num} \gets \tfrac{1}{2}(q - v)$ \COMMENT{$\text{num} \in \mathbb{R}^M$}
\STATE $\text{den} \gets \tfrac{1}{2}(u \odot u - v)$ \COMMENT{$\text{den} \in \mathbb{R}^M$}
\STATE $MS_k \gets \begin{cases}
    \text{num}_k / \text{den}_k, & \text{if } \text{den}_k > 0 \\
    0, & \text{otherwise}
  \end{cases} \quad \forall k$ 
\RETURN MS \COMMENT{$MS \in \mathbb{R}^M$}
\end{algorithmic}
\end{algorithm}
\end{minipage}
\end{figure*}

We build on the Monosemanticity Score (MonoScore)~\cite{monoscore} to quantify how consistently a neuron responds to a single visual concept. Intuitively, a neuron is \emph{monosemantic} if the images that strongly activate it are semantically similar. It is \emph{polysemantic} if those images are diverse and unrelated. Originally validated with human judgment \cite{monoscore}, MonoScore has proven useful for analyzing sparse autoencoders and vision-language models. 

Formally, we define MonoScore using the following setup. Let $\mathcal{I} = \{x_n\}_{n=1}^N$ be a set of $N$ images. Let $E(\cdot)$ be a frozen image encoder that defines a semantic similarity space, producing $\ell_2$-normalized embeddings
$   h_n \;=\; \frac{E(x_n)}{\|E(x_n)\|_2} \in \mathbb{R}^d,$
and cosine similarities $s_{nm} = h_n^\top h_m \in [-1, 1]$ for all unordered pairs $(n,m)$ where $n < m$.

For each image $x_n$ and neuron $k \in \{1,\dots,M\}$, let $a_{kn}$ denote the activation of neuron $k$ on $x_n$ in some target representations (e.g., an SAE latent space). MonoScore is agnostic to how the activations are obtained. It only has access to the activation $a_{kn}$ and the similarity score $s_{nm}$.

For comparability between the neurons, MonoScore first min--max normalizes activations
\begin{equation}
    \tilde a_{kn}
    \;=\;
    \frac{a_{kn} - \min_{n'} a_{kn'}}{\max_{n'} a_{kn'} - \min_{n'} a_{kn'}}
    \;\in [0,1],
\end{equation}
which rescales each neuron's activations to its dynamic range within the dataset, enabling comparison across neurons. The MonoScore for a neuron $k$ is then an activation-weighted average of pairwise similarities:
\begin{equation}
    MS_k
    \;=\;
    \frac{\displaystyle \sum_{n < m} \tilde a_{kn} \tilde a_{km} \, s_{nm}}
         {\displaystyle \sum_{n < m} \tilde a_{kn} \tilde a_{km}}.
    \label{eq:ms_def}
\end{equation}
The term $\tilde a_{kn} \tilde a_{km}$ can be read as a \emph{co-activation weight} for the pair $(x_n, x_m)$: it is large only when the neuron $k$ fires strongly on both images and small otherwise. The numerator thus aggregates cosine similarities $s_{nm}$ with higher weight on pairs where $k$ is jointly active, while the denominator is the total co-activation mass over all pairs. Their ratio is thus the average similarity, in the embedding space of $E(\cdot)$, between images that co-activate neuron $k$: when $MS_k$ is high, the images that produce strong activations from $k$ lie close together and $k$ behaves like a coherent concept detector, whereas low $MS_k$ means that the images strongly activating k are semantically diverse and do not form a coherent concept. Algorithm \ref{alg:monoscore-baseline}, proposed by \cite{monoscore}, implements Equation \eqref{eq:ms_def} directly. It iterates over all $O(N^2)$ image pairs, computing cosine similarities $s_{nm}$ and updating the neuron-wise numerator and denominator vectors. This pairwise approach is faithful to the definition, but its quadratic scaling in the number of images makes it prohibitively slow for repeated evaluation during training. The next subsection derives an equivalent formulation eliminating this bottleneck.

\subsection{Linear MonoScore Algorithm}
Equation \eqref{eq:ms_def} defines $MS_k$ through pairwise comparisons, which is intuitive but requires iterating over all $O(N^2)$ image pairs, as in Algorithm~\ref{alg:monoscore-baseline}. This quickly becomes the dominant cost when MonoScore is evaluated on large datasets. We thus seek an equivalent formulation whose computation scales linearly in the number of images.

We derive a linear-time algorithm for MonoScore by rewriting both the numerator and denominator of \eqref{eq:ms_def} in terms of simple per-neuron and per-image statistics. Instead of summing directly over all pairs $(n,m)$, Algorithm~\ref{alg:monoscore-linear} maintains three quantities for each neuron $k$ in a single pass: the sum of normalized activations $u_k = \sum_n \tilde a_{kn}$ (accumulated in vector $u \in \mathbb{R}^M$), the sum of squared activations $v_k = \sum_n \tilde a_{kn}^2$ (accumulated in $v \in \mathbb{R}^M$), and an embedding-weighted sum $w_{:k} = \sum_n \tilde a_{kn} h_n$ (accumulated as columns of $w \in \mathbb{R}^{d \times M}$). 

From these statistics, the algorithm computes expressions
\[
    \text{num}_k = \tfrac{1}{2}\bigl(\|w_{:k}\|_2^2 - v_k\bigr),
    \qquad
    \text{den}_k = \tfrac{1}{2}\bigl(u_k^2 - v_k\bigr),
\]
and returns $MS_k = \text{num}_k / \text{den}_k$ whenever $\text{den}_k > 0$.


\begin{proposition}[Equivalence of MonoScore algorithms]
\label{prop:monoscore-equivalence}
Let $MS^{\text{pair}} \in \mathbb{R}^M$ denote the MonoScore computed by
Algorithm~\ref{alg:monoscore-baseline} and $MS^{\text{lin}} \in \mathbb{R}^M$
the MonoScore computed by Algorithm~\ref{alg:monoscore-linear}.
Then for every neuron $k \in \{1,\dots,M\}$,
\[
    MS^{\text{pair}}_k = MS^{\text{lin}}_k.
\]
\end{proposition}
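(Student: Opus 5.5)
The plan is to show that, for each fixed neuron $k$, the two algorithms produce identical numerator and denominator vectors. Both then apply the same final rule ($\text{num}_k/\text{den}_k$ if $\text{den}_k>0$, else $0$), so the outputs agree. Both algorithms share lines 3--4 (normalization of $h_n$ and min--max normalization of $a_{kn}$), so the inputs $h_n$ and $\tilde a_{kn}$ are the same. Unrolling the double loop of Algorithm~\ref{alg:monoscore-baseline} gives
\[
\text{num}^{\text{pair}}_k=\sum_{n<m}\tilde a_{kn}\tilde a_{km}s_{nm},\qquad \text{den}^{\text{pair}}_k=\sum_{n<m}\tilde a_{kn}\tilde a_{km}.
\]
Unrolling the single loop of Algorithm~\ref{alg:monoscore-linear} gives $u_k=\sum_n\tilde a_{kn}$, $v_k=\sum_n\tilde a_{kn}^2$ and $w_{:k}=\sum_n\tilde a_{kn}h_n$; in particular line 11 yields $q_k=\|w_{:k}\|_2^2$.

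For the denominator, I will expand $u_k^2=\sum_{n,m}\tilde a_{kn}\tilde a_{km}$ over all ordered pairs. I split this sum into the diagonal terms $n=m$, which contribute $v_k$, and the off-diagonal terms. By symmetry of $\tilde a_{kn}\tilde a_{km}$ in $(n,m)$, the off-diagonal part equals $2\sum_{n<m}\tilde a_{kn}\tilde a_{km}$. Hence $\tfrac12(u_k^2-v_k)=\text{den}^{\text{pair}}_k$.

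For the numerator, I will expand
\[
\|w_{:k}\|_2^2=\Bigl(\sum_n\tilde a_{kn}h_n\Bigr)^{\!\top}\Bigl(\sum_m\tilde a_{km}h_m\Bigr)=\sum_{n,m}\tilde a_{kn}\tilde a_{km}\,h_n^\top h_m.
\]
The diagonal terms use $h_n^\top h_n=\|h_n\|_2^2=1$ and therefore contribute exactly $v_k$. The off-diagonal terms use the symmetry $s_{nm}=s_{mn}$, which turns them into $2\sum_{n<m}\tilde a_{kn}\tilde a_{km}s_{nm}$. Thus $\tfrac12(q_k-v_k)=\text{num}^{\text{pair}}_k$.

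The only delicate point is the unit-norm step, since the diagonal cancellation in the numerator relies on it. I will justify it from line 3, $h_n=E(x_n)/\|E(x_n)\|_2$, implicitly assuming $E(x_n)\neq 0$, which the algorithm already requires for the normalization to be defined. A minor second caveat concerns constant neurons, where min--max normalization divides by zero. In that case the normalization step is the same (however it is conventionally handled) in both algorithms, so the identities above still hold verbatim. Since the denominators coincide, the case split $\text{den}_k>0$ versus $\text{den}_k=0$ is also the same, and $MS^{\text{pair}}_k=MS^{\text{lin}}_k$ for every $k$.
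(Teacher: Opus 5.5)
Your proof is correct and follows essentially the same route as the paper: expand $\|w_{:k}\|_2^2$ and $u_k^2$ as double sums, split off the diagonal terms (which give $v_k$, using $\|h_n\|_2=1$), and fold the symmetric off-diagonal terms into $2\sum_{n<m}$ to recover the pairwise numerator and denominator. Your remarks on the unit-norm step and on the identical $\text{den}_k>0$ case split make explicit what the paper's proof uses only implicitly.
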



The proof, provided in the Appendix \ref{app:proof-monoscore-equivalence}, expands the double sums in \eqref{eq:ms_def} using combinatorial identities. Algorithm~\ref{alg:monoscore-linear} computes exactly the same MonoScore scores as the original pairwise procedure, but with a single pass over $N$ images and no explicit pairwise similarities.

\subsection{MonoLoss: From Metric to Training Objective}

The MonoScore metric operates as a post-hoc evaluation: given a dataset of $N$ images, Algorithms~\ref{alg:monoscore-baseline} and~\ref{alg:monoscore-linear} quantify neuron monosemanticity. To optimize sparse autoencoders for monosemanticity, we reformulate this metric as a differentiable loss computable at each gradient step.

During training, we approximate MonoScore \emph{per batch}. Given a mini-batch of size $B$, we compute a batch-wise MonoScore $MS_k^{\text{(batch)}}$ for each neuron $k$ by applying the same linear-time algorithm as in Algorithm~\ref{alg:monoscore-linear}, but with $N$ replaced by $B$ (i.e., all sums and statistics are taken over the current batch instead of the whole dataset). Neurons whose batch denominator is numerically zero are treated as inactive and excluded from the average. 

Let $\mathcal{K}_{\text{active}}$ be the set of neurons with a valid batch score. Our \textbf{MonoLoss} is simply
\[
    \mathcal{L}_{\text{mono}}
    \;\triangleq\;
    1 - \frac{1}{|\mathcal{K}_{\text{active}}|}
        \sum_{k \in \mathcal{K}_{\text{active}}}
        MS_k^{\text{(batch)}},
\]
with $\mathcal{L}_{\text{mono}} = 1$ if no neuron is active. Minimizing $\mathcal{L}_{\text{mono}}$ increases the average batch-wise MonoScore, encouraging neurons to respond to semantically consistent sets of images

The overall training objective simply augments the base loss with our regularizer:

\begin{equation}
    \mathcal{L}
    \;=\;
    \mathcal{L}_{\text{base}}
    \;+\;
    \lambda \,\mathcal{L}_{\text{mono}} .
    \label{eq:MonoLoss}
\end{equation}
where $\mathcal{L}_{\text{base}}$ denotes the original task loss (e.g., reconstruction error, sparsity penalties, cross entropy), and $\lambda$ controls the strength of the monosemanticity pressure. For reporting and ablations, we compute full-dataset MonoScores using the linear algorithm and use the pairwise implementation as a reference to verify equivalence and measure speedup.

\subsection{Computational Complexity and Speedup}
\label{sec:speedup}

We analyze the computational cost of both MonoScore algorithms for a dataset
of $N$ images, $M$ neurons, and $d$-dimensional encoder features.

\textbf{Pairwise Algorithm Complexity.}
The baseline implementation, Algorithm~\ref{alg:monoscore-baseline}, iterates over all $\binom{N}{2} = \frac{N(N-1)}{2}$ unique image pairs. For each pair $(n,m)$, it performs four operations: one cosine similarity calculation, $s_{nm} = h_n^\top h_m$, one $M$-dimensional element-wise product, $r_{:} = \tilde a_{:n} \odot \tilde a_{:m}$, and two $M$-dimensional vector updates for the numerator and denominator accumulators. The cost per pair is therefore $O(d + M)$. The total computational complexity of the pairwise algorithm, $C_{\text{pair}}$, is:
\begin{equation*}
      C_{\text{pair}}(N, M, d) = \frac{N(N-1)}{2} \cdot O(d + M) = O(N^2(d+M))  
\end{equation*}
This quadratic scaling in the number of samples $N$ makes the algorithm intractable for large datasets.

\textbf{Linear-Time Algorithm Complexity.}
Our proposed implementation in Algorithm~\ref{alg:monoscore-linear} iterates through the $N$ images only once. For each image $n$, it updates three statistics: $u \in \mathbb{R}^M$, $v \in \mathbb{R}^M$, and $w \in \mathbb{R}^{d \times M}$. The costs for each image involve updates to $u$ and $v$, which are element-wise vector additions costing $O(M)$ each, and the update to $w$ via an outer product, $w \leftarrow w + h_n \tilde a_{:n}^\top$, which is the dominant step of $O(dM)$ operations. After the single loop, the algorithm computes the final numerators and denominators. This involves calculating $q_k = \|w_{:k}\|_2^2$ for all $M$ neurons, which costs $O(dM)$, and final element-wise operations costing $O(M)$. The total complexity $C_{\text{lin}}$ is dominated by the loop over $N$ images:
\begin{equation*}
    C_{\text{lin}}(N, M, d) = N \cdot O(dM + M) + O(dM) = O(NdM)
\end{equation*}
For fixed model dimensions $(d, M)$, our algorithm's complexity scales linearly $O(N)$ with the number of samples, in contrast to the baseline's $O(N^2)$ complexity.

\textbf{Implementation details.}
Both algorithms are written with explicit loops for clarity, but implemented using batched tensor operations. In the pairwise reference (Algorithm~\ref{alg:monoscore-baseline}), the outer loop over anchor images is kept while the inner loop processes blocks of $B_{\text{pair}}$ comparisons at once via vectorized cosine similarities and tensor reductions. In our linear-time version (Algorithm~\ref{alg:monoscore-linear}), the dataset is traversed in mini-batches of size $B$, updating $u$, $v$, and $w$ with matrix multiplications and vectorized additions.

\textbf{Empirical Wall-Clock Speedup.}
We empirically validate this theoretical analysis by benchmarking both algorithms, measuring the wall-clock time to compute MonoScore for datasets of geometrically increasing size $N$. All experiments are executed in PyTorch \cite{paszke2019pytorch} on a single NVIDIA H100 80GB GPU. Figure~\ref{fig:training_time_speedup} confirms our complexity analysis. The runtime of the baseline algorithm clearly exhibits the expected $O(N^2)$ quadratic scaling, with its slope appearing much steeper than our linear-time formulation, which scales in line with its $O(N)$ complexity.

\begin{figure}[ht]
    \centering
    \includegraphics[width=0.86\linewidth]{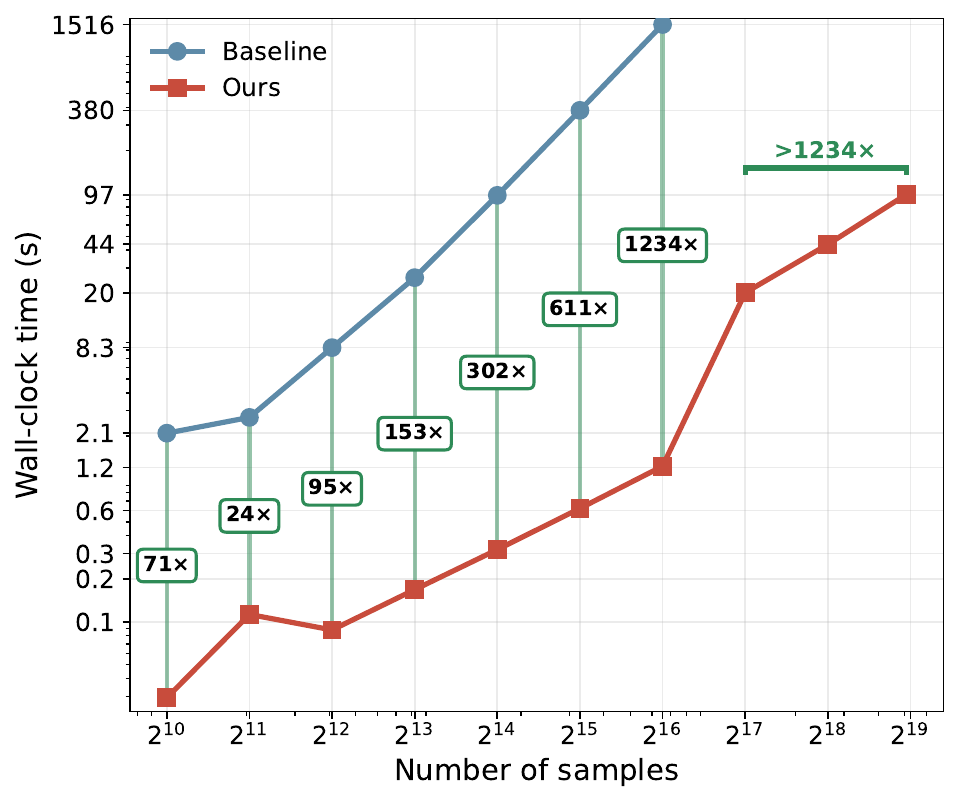}
    \caption{
        Wall-clock time to compute MonoScore as a function of dataset size $N$  (log--log scale), benchmarked on an NVIDIA H100 GPU. The baseline implementation (Algorithm~\ref{alg:monoscore-baseline}) clearly exhibits $O(N^2)$ quadratic scaling. In contrast, our linear-time      formulation (Algorithm~\ref{alg:monoscore-linear}) scales linearly ($O(N)$). Green annotations highlight the speedup factor, which reaches 1234$\times$ at $N=2^{16}$, the largest point evaluated for the prohibitively slow baseline.
    }
    \label{fig:training_time_speedup}
\end{figure}

While the baseline algorithm remains tractable for small $N$, such as a single training batch ($N \approx 2048$), its quadratic cost becomes prohibitive for evaluation on larger validation sets. At $N=2^{16}$ (65,536) samples, the baseline takes about 1516 seconds (over 25 minutes), whereas our method processes the same amount of data in only 1.2 seconds, which represents a 1234$\times$ speedup. The baseline evaluation was not extended beyond $N=2^{16}$ as its prohibitive quadratic runtime was already clearly established. In contrast, our linear-time method continues to scale efficiently, as shown by the measurements up to $N=2^{19}$. For a $50\text{k}$-sample dataset comparable to the ImageNet-1K validation set, the reference implementation takes $18.8\,\text{min}$, whereas our linear-time method takes $0.9\,\text{s}$ ($\sim1200\times$ speedup).

Using MonoLoss from Eq.~\ref{eq:MonoLoss} as an additional regularizer, we train a BatchTopK SAE on pre-computed CLIP-ViT-L/14 \cite{clip} features from OpenImagesV7 \cite{OpenImages} (batch size 2048, 8192 latents) with the same efficient feature loader and batching setup for all configurations. In this setting, a single epoch takes 6.81s without MonoLoss, 7.08s with our linear-time MonoLoss, and 1127.35s with the quadratic reference implementation, so our method adds only 4\% training-time overhead while achieving a $159\times$ speedup over the reference.

\begin{figure}[ht]
    \centering
    \includegraphics[width=\linewidth]{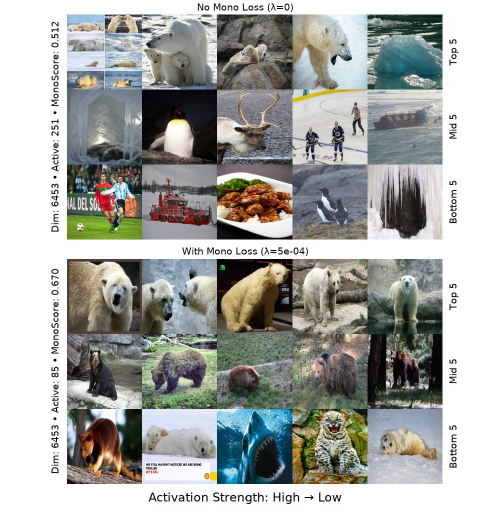}
    \caption{
    Comparison of activation patterns with and without MonoLoss for a BatchTopK autoencoder on CLIP-image features. Both settings show latent 6453, ranked 426 without MonoLoss and 592 with MonoLoss by validation-set monosemanticity. Each row displays the top-5, middle-5, and bottom-5 activated samples of this latent, ordered by activation strength from high to low. Without MonoLoss, the dimension appears monosemantic with respect to polar-bear/ice-related only for the strongest activations, while weaker ones include unrelated concepts like sports and food. With MonoLoss, the same latent mostly focuses on polar bears and bears across top, middle, and bottom activations, indicating better monosemanticity over the full range of activation strengths.
    }
    \label{fig:qualitative_batch_topk_dim6453}
\end{figure}

\section{Experiments}

We assess MonoLoss in two settings. In Section \ref{sec:SAE}, we apply MonoLoss to multiple different SAEs, vision encoders, and datasets. In Section \ref{sec:finetuning}, MonoLoss is adopted as a regularizer to finetune vision encoders for downstream tasks.

\subsection{SAE with MonoLoss}
\label{sec:SAE}

\begin{figure*}[ht]
    \centering
    \includegraphics[width=\linewidth]{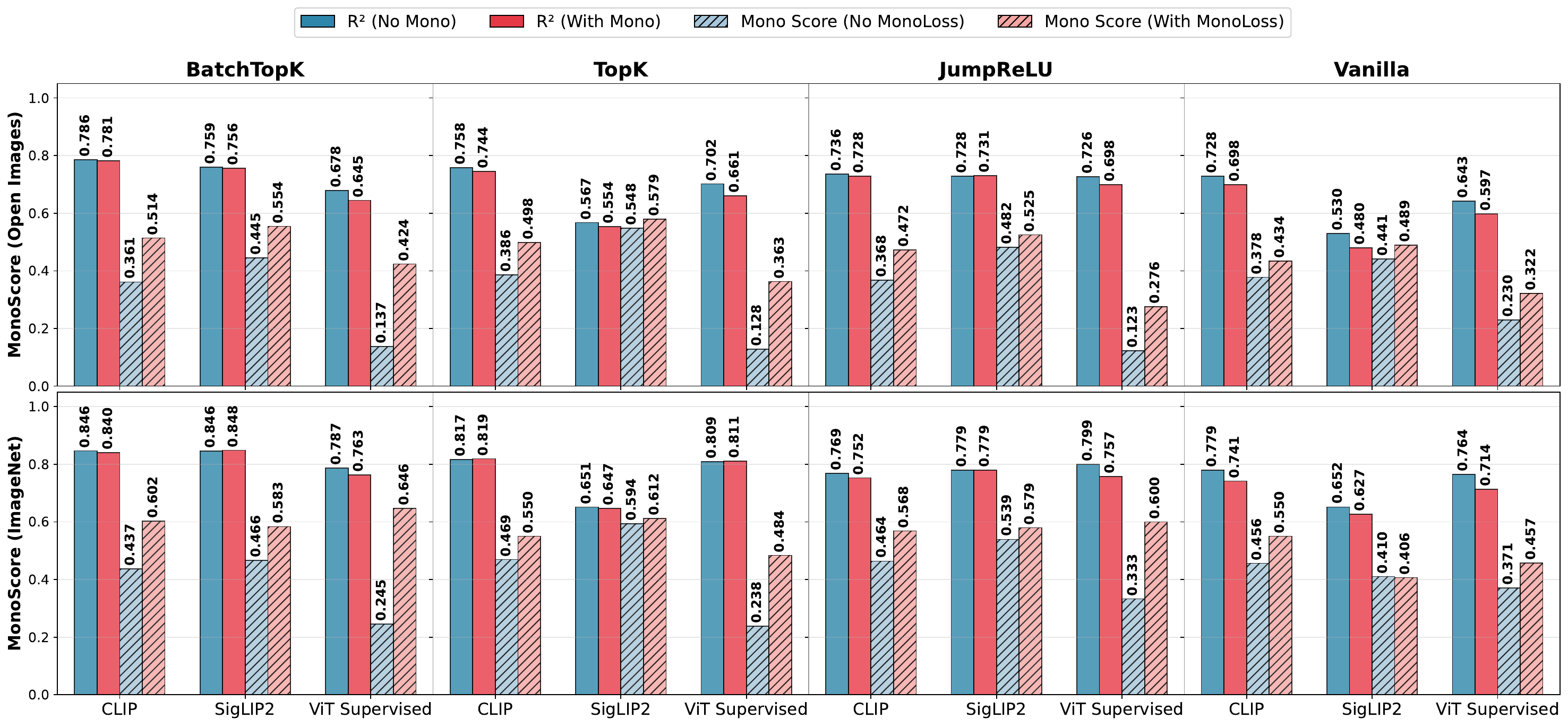}
    \caption{$R^2$ and MonoScore for SAEs trained with and without MonoLoss across four architectures (BatchTopK, TopK, JumpReLU, Vanilla ReLU), three vision encoders (CLIP, SigLIP2, ViT Supervised), and two datasets (Open Images V7 test set, ImageNet-1K validation set). MonoLoss consistently raises monosemanticity across nearly all configurations, with BatchTopK exhibiting minimal reconstruction drops. Supervised ViT shows the most pronounced interpretability--reconstruction trade-off, but also achieves the largest absolute gains in MonoScore. Notably, several encoder--architecture pairs on ImageNet-1K (BatchTopK--SigLIP2, TopK--CLIP, TopK--ViT) show slight $R^2$ improvements with MonoLoss.}

    \label{fig:mono_loss_comparison}
\end{figure*}

We aim to investigate whether explicitly encouraging monosemanticity during training improves the interpretability of SAEs, and to what extent this comes at the cost of reconstruction quality. We therefore compare pairs of models trained with and without MonoLoss, holding all other hyperparameters fixed, and evaluate them on: (i) reconstruction and the monosemanticity score, (ii) class purity (what fraction of images activating a latent belong to its dominant class), and (iii) qualitative inspection of neurons across their full activation range (from strongly to weakly activating samples).

\textbf{Setup and models.}

We train and test sparse autoencoders on OpenImagesV7 \cite{OpenImages} (train/validation/test), and ImageNet-1K \cite{imagenet15russakovsky} (train/validation) using their standard splits. For each image, we extract frozen features from three vision encoders: (i) CLIP ViT-L/14 \cite{clip}, (ii) SigLIP2 \cite{SigLIP2}, and (iii) a supervised ViT-L/16 \cite{vit}. On top of each encoder, we instantiate four SAE architectures with $d = 8192$ latent dimensions: TopK SAE \cite{topksae} with $k = 64$ active latents per example; BatchTopK SAE \cite{batchtopksae} with $k = 64$ active latents per batch; Vanilla ReLU SAE \cite{reluSAE} with $\ell_1 = 0.0001$ sparsity penalty; and JumpReLU SAE \cite{jumprelu} with learnable thresholds. All models are trained with a batch size of 2048, with and without MonoLoss (differing only in the coefficient $\lambda$). Implementation details are provided in the Appendix \ref{app:sae_training_details}.

\textbf{Reconstruction and monosemanticity.}

For each encoder--SAE pair, we train with and without MonoLoss, reporting $R^2$ (the coefficient of determination measuring reconstruction quality) and mean MonoScore on the Open Images test set and ImageNet-1K validation set (Figure~\ref{fig:mono_loss_comparison}). Across nearly all configurations, MonoLoss improves monosemanticity with modest reconstruction cost. BatchTopK achieves the best trade-off: $R^2$ drops by less than one percentage point for CLIP and SigLIP2, while MonoScore increases substantially (e.g., 0.361 to 0.514 for CLIP on Open Images). In several cases---BatchTopK--SigLIP2, TopK--CLIP, and TopK--ViT on ImageNet-1K; JumpReLU--SigLIP2 on Open Images---MonoLoss even slightly improves $R^2$. Supervised ViT exhibits the sharpest trade-off: the largest $R^2$ reductions but also the largest MonoScore gains (0.245 to 0.646 for BatchTopK on ImageNet-1K). Vanilla ReLU incurs the greatest reconstruction penalty and is the only configuration where MonoLoss slightly reduces MonoScore (SigLIP2 on ImageNet-1K).

\textbf{Qualitative Results.} We complement these metrics with visual inspection of individual neurons. For a given SAE architecture on CLIP-image features, we rank latents in each model by their monosemanticity score on the validation set and then either (i) select the same latent index in both runs or (ii) compare latents with the same rank in this ordering. For each chosen latent, we collect all validation examples, sort them by activation strength, and visualize three bands of positively activated images (top, middle, and bottom; from high to low activation). This setup, illustrated in Figures~\ref{fig:qualitative_batch_topk_dim6453} and \ref{fig:qualitative_batch_topk_rank392}, allows us to assess how consistently a neuron responds across its full activation range rather than only at the extreme tail. Qualitatively, SAEs trained with MonoLoss often display more semantically homogeneous activation patterns, particularly for mid-ranked latents where the baseline still activates frequently but tends to mix several visual motifs. For the very top-ranked latents, both models are usually already quite clean, while at the lowest ranks, both may remain diffuse. We provide additional examples in the Appendix \ref{app:sae_qualitative_examples}.

\begin{figure*}[ht]
\centering
\includegraphics[width=\textwidth]{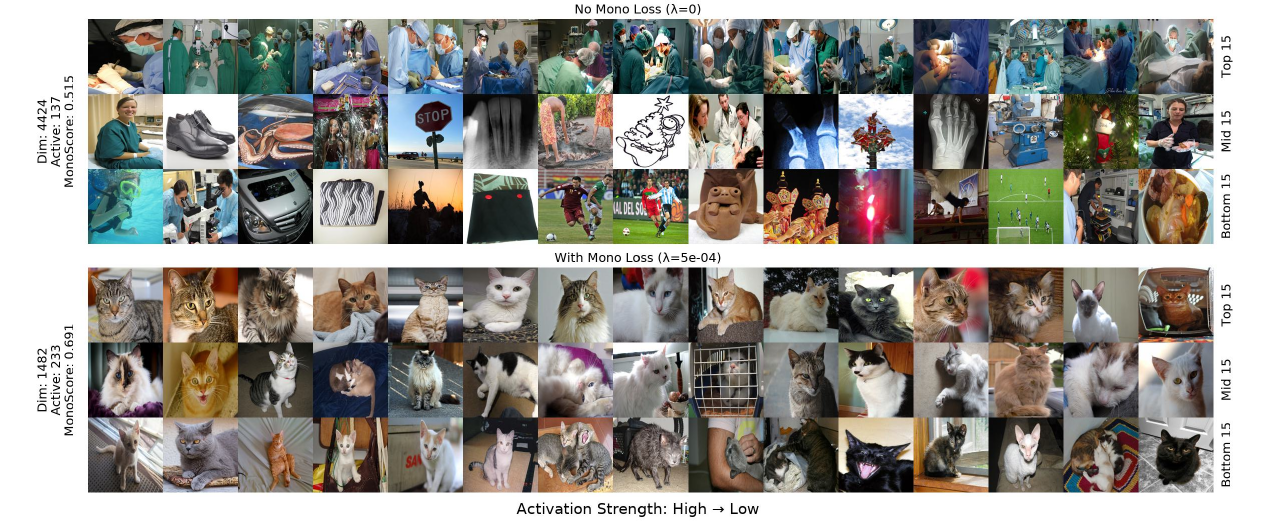}
\caption{
Comparison of activation patterns with and without monosemanticity loss for BatchTopK autoencoders on CLIP-image features. The two rows show latents of the rank ($392$ out of $8192$) in the no-MonoLoss and MonoLoss settings, respectively. For each latent, we show top, middle, and bottom positively activated samples, ordered by activation strength from high to low. The latent from the baseline (no MonoLoss) targets ``surgeons and operating rooms" but rapidly loses coherence, firing on unrelated images like ``shoes, x-rays, and soccer" in weaker activations. In contrast, the latent from the model trained with MonoLoss remains highly coherent, consistently identifying ``cats" across its full dynamic range.}
\label{fig:qualitative_batch_topk_rank392}
\end{figure*}

\textbf{Class-level purity.}
The qualitative comparison in Figure~\ref{fig:qualitative_batch_topk_dim6453} suggests that MonoLoss produces latents with more coherent activation patterns across their dynamic range. To quantify this at scale, we measure class purity on ImageNet-1K validation. For each latent $k$ with at least one positive activation, let $a_{kn}$ denote its activation on sample $n$ with label $y_n$. We assign latent $k$ to its dominant class $c^*_k = \arg\max_c \sum_{n: y_n = c} \mathbf{1}[a_{kn} > 0]$. Let $N_k = \{n : a_{kn} > 0\}$ be the set of samples that activate latent $k$, and $N_k^* = \{n \in N_k : y_n = c^*_k\}$ the subset belonging to the dominant class. We define \emph{binary purity} as $|N_k^*| / |N_k|$ and \emph{weighted purity} as $\left(\sum_{n \in N_k^*} a_{kn}\right) / \left(\sum_{n \in N_k} a_{kn}\right)$.

Table~\ref{tab:class_purity_full} reports mean purity across active latents. MonoLoss improves both metrics in all 12 encoder--SAE combinations, with varying magnitude. Supervised ViT shows the largest gains: BatchTopK improves from $0.152 \to 0.723$ (binary) and $0.331 \to 0.795$ (weighted). SigLIP exhibits the highest baseline purity but the smallest gains. Among architectures, BatchTopK and JumpReLU achieve the highest post-MonoLoss purity, while Vanilla ReLU yields near-zero purity across all encoders (binary $\leq 0.077$) with only marginal improvement from MonoLoss. We note that class purity inherits the label bias of ImageNet-1K; these results demonstrate relative improvement from MonoLoss rather than absolute monosemanticity.
\begin{table*}[ht]
    \centering
    \small
    \setlength{\tabcolsep}{4pt}
    \begin{tabular}{ll|ccc|ccc|ccc|ccc}
        \toprule
        & & \multicolumn{3}{c|}{\textbf{BatchTopK}} & \multicolumn{3}{c|}{\textbf{TopK}} & \multicolumn{3}{c|}{\textbf{JumpReLU}} & \multicolumn{3}{c}{\textbf{Vanilla}} \\
        \cmidrule(lr){3-5} \cmidrule(lr){6-8} \cmidrule(lr){9-11} \cmidrule(lr){12-14}
        \textbf{Enc.} & & Base & Mono & $\Delta$ & Base & Mono & $\Delta$ & Base & Mono & $\Delta$ & Base & Mono & $\Delta$ \\
        \midrule
        \multirow{2}{*}{CLIP} 
            & Bin & 0.144 & \textbf{0.462} & \gd{+0.318} & 0.159 & \textbf{0.251} & \gd{+0.092} & 0.208 & \textbf{0.446} & \gd{+0.238} & 0.002 & \textbf{0.007} & \gd{+0.005} \\
            & Wtd & 0.217 & \textbf{0.539} & \gd{+0.323} & 0.240 & \textbf{0.339} & \gd{+0.099} & 0.278 & \textbf{0.532} & \gd{+0.255} & 0.009 & \textbf{0.024} & \gd{+0.016} \\
        \midrule
        \multirow{2}{*}{SigLIP} 
            & Bin & 0.233 & \textbf{0.326} & \gd{+0.093} & 0.340 & \textbf{0.416} & \gd{+0.076} & 0.249 & \textbf{0.337} & \gd{+0.087} & 0.037 & \textbf{0.077} & \gd{+0.041} \\
            & Wtd & 0.305 & \textbf{0.404} & \gd{+0.100} & 0.400 & \textbf{0.473} & \gd{+0.073} & 0.321 & \textbf{0.409} & \gd{+0.088} & 0.069 & \textbf{0.141} & \gd{+0.071} \\
        \midrule
        \multirow{2}{*}{ViT} 
            & Bin & 0.152 & \textbf{0.723} & \gd{+0.572} & 0.128 & \textbf{0.431} & \gd{+0.303} & 0.287 & \textbf{0.609} & \gd{+0.323} & 0.002 & \textbf{0.004} & \gd{+0.001} \\
            & Wtd & 0.331 & \textbf{0.795} & \gd{+0.463} & 0.319 & \textbf{0.586} & \gd{+0.267} & 0.445 & \textbf{0.767} & \gd{+0.322} & 0.011 & \textbf{0.020} & \gd{+0.009} \\
        \bottomrule
    \end{tabular}
    \caption{Class purity on ImageNet validation for all encoder--SAE combinations, computed over active latents.
    Encoders: CLIP (ViT-L/14), SigLIP (SigLIP2), ViT (ViT-L/16).
    Metrics: Bin (binary purity), Wtd (weighted purity). 
    MonoLoss increases purity in all 12 settings (\gd{+}) for both metrics. $\Delta$ values are computed from unrounded data. Purity metrics inherit ImageNet label bias and reflect relative improvement.}
    \label{tab:class_purity_full}
\end{table*}

\subsection{Finetuning vision models with MonoLoss}
\label{sec:finetuning}

    We assess the impact of MonoLoss as an auxiliary regularizer during fine-tuning by examining both task performance and representation quality. We conduct experiments on image classification with two settings: (i) a baseline with no MonoLoss regularization (only Cross Entropy loss) and (ii) our finetuning setup with MonoLoss as a regularizer.
    
    \textbf{Experimental setup.} We utilized the pretrained ResNet50 \cite{resnet50} and CLIP-ViT-B/32 \cite{clip} and finetuning on CIFAR-10 and CIFAR-100 \cite{cifar10}. During experiments, we found out that directly using MonoLoss to finetune enforces the model to overfit on the fixed subset of images to keep the MonoScore high. It could be because the last layer of the models is not originally trained for monosemantic pattern. To address the problem, we simply add an additional linear layer before the classification head and use the output embeddings from that layer as activations. The objective function for finetuning is $\mathcal{L}$ = \text{CrossEntropy} +\ $\lambda\,\mathcal{L}_{\text{mono}}$ with $\lambda$ is a regularizer coefficient. More details are provided in the Supplementary Material \ref{app:Finetune_settings}.

    \textbf{MonoLoss increases the finetuning performance.} We ablate the regularization weight $\lambda$ for two models (ResNet-50 and CLIP-ViT-B/32) on three benchmark datasets (ImageNet-1K, CIFAR1-0, and CIFAR-100), showing in the Table \ref{tab:finetuning_result}. We find that MonoLoss consistently enhances classification performance. Regarding ImageNet-1K, the approximate gain of 0.6\% and 0.1\% on accuracy are obtained from ResNet-50 ($\lambda{=}0.1$) and CLIP-ViT-B/32 ($\lambda{=}0.1$), respectively. On CIFAR-10, the peak improvements are +0.17\% for ResNet-50 and +0.11\% for CLIP-ViT-B/32, both at $\lambda{=}0.1$. On CIFAR-100, the accuracies are 0.24\% and 0.43\% higher for ResNet-50 ($\lambda{=}0.07$) and CLIP-ViT-B/32 ($\lambda{=}0.05$), respectively.

    \begin{table}[ht]
        \centering
        \small
        \begin{tabular} {c ccc ccc}
            \toprule
            \multirow{2}{*}{\textbf{$\lambda$}} &  \multicolumn{3}{c}{\textbf{ResNet-50}} & \multicolumn{3}{c}{\textbf{CLIP-ViT-B/32}} \\ 
            \cline{2-7}  & 
            IN & CF-10 & CF-100 &
            IN & CF-10 & CF-100 \\         
            \hline
            0.00 & 80.34 & 98.26 & 88.05 & 78.45 & 98.09 & 88.04   \\
            0.03 & 80.29 & 98.37 & 88.18 & 78.50 & 98.08 & 88.32   \\
            0.05 & 80.25 & 98.40 & 88.20 & 78.34 & 98.07 & \textbf{88.47}    \\
            0.07 & 80.31 & 98.38 & \textbf{88.29} & 78.50 & 98.00  & 88.32   \\
            0.10 & \textbf{80.93} & \textbf{98.43} & 88.12 & \textbf{78.52} & \textbf{98.20} & 88.37   \\
            \bottomrule
        \end{tabular}
        \caption{The accuracy in fine-tuning ResNet-50 and CLIP-ViT-B/32 on ImageNet-1K (IN), CIFAR-10 (CF-10), and CIFAR-100 (CF-100). Adding MonoLoss yields consistent performance gains on both models and all datasets.}
        \label{tab:finetuning_result}
    \end{table}
    \textbf{MonoLoss enhances the monosemanticity}. We visualize top-activating images to assess MonoLoss’s effect on concept purity. Specifically, for each neuron, we consider only positive activation values and select top-15, middle-15, and bottom-15. Figure \ref{fig:cifar10_rn50_10} highlights the clear difference of the concept heterogeneity between the no-MonoLoss  (top three rows) and MonoLoss setting (bottom three rows) at rank 10. With MonoLoss, the activations exhibit strong semantic coherence, maintaining class-consistent content not only in the top-15 but even across the bottom-15 samples. It is also noted that the number of activating images is 2837 on the baseline setting, whereas only 73 images having positive activation when finetuning with MonoLoss. This results demonstrate that MonoLoss enforces the neuron to activate only for the set of similar images. Additional examples are provided in the Supplementary Material \ref{app:finetuning_qualitative}.

    \textbf{MonoLoss enhances the performance with 1 minute overhead.} Table \ref{tab:training_time} compares per-epoch fine-tuning time across two backbones (ResNet-50, CLIP-ViT-B/32) on two datasets (ImageNet-1K, CIFAR-10) with three settings: no MonoLoss (Cross Entropy only), Cross Entropy with baseline MonoScore, and Cross Entropy with our MonoScore. Our linear-time MonoLoss adds negligible cost relative to the traditional finetuning: on ImageNet-1K it rises +0.42\% and +0.78\% for ResNet-50 and CLIP-ViT-B/32, respectively. On CIFAR-10, the absolute overhead remains tiny: 14\% on ResNet-50 and 0.59\%  on CLIP-ViT-B/32. In contrast, the prior MonoScore formulation imposes substantial overhead: +26.7\% for ResNet-50 and +29.8\% for CLIP-ViT-B/32 on ImageNet-1K, and on CIFAR-10, it is $8.3\times$ slower for ResNet-50 and +32.5\% for CLIP-ViT-B/32. Overall, our formulation preserves the accuracy-monosemanticity gains while keeping training time essentially unchanged.
    
    \begin{table}[h]
        \centering
        \small
        \begin{tabular} {c cc cc}
            \toprule
            \multirow{2}{*}{\textbf{Setting}} &  \multicolumn{2}{c}{\textbf{ResNet-50}} & \multicolumn{2}{c}{\textbf{CLIP-ViT-B/32}} \\ 
            \cline{2-5}  & 
            IN & CF-10 & IN & CF-10 \\         
            \hline
            No MonoLoss & 44.95 & 0.07 & 42.53 & 1.69\\
            MonoLoss (baseline) & 56.94 & 0.58 & 55.21 & 2.24\\
            MonoLoss (ours) & 45.14 & 0.08 & 42.86 & 1.70\\
            \bottomrule
        \end{tabular}
        \caption{Finetuning time per epoch (in minutes) for ResNet-50 and CLIP-ViT-B/32 on ImageNet-1K (IN-1K) and CIFAR-10 (CF-10) with three settings: no MonoLoss, baseline original formulation MonoLoss \cite{monoscore}, and our MonoLoss formulation. The time of CIFAR-100 is approximately equal to CIFAR-10 due to the similar number of samples. Our formulation increases accuracy using only  1\% additional  time, in contrast to the substantial overhead of  the previous formulation (up to 
        760\%).}
        \label{tab:training_time}
    \end{table}

\section*{Acknowledgements}
    This work was partially supported by USA NSF grants IIS-2123920 [D.S], IIS-2212046 [D.S], NSERC-DG RGPIN‐2022‐05378 [M.S.H] and Amazon Research Award [M.S.H].

\section{Conclusions}
    We transformed MonoScore from a costly post-hoc analysis into a practical training signal. By reformulating MonoScore into a single-pass with linear-time computation, we make large-scale evaluation and in-training use feasible. We also introduced MonoLoss, a simple plug-and-play objective that directly rewards semantically consistent activations and integrates seamlessly into standard training. Across SAEs trained on diverse feature sources and architectures, MonoLoss consistently increases monosemanticity and yields higher performance with near-zero computational cost. However, MonoScore’s dependence on pre-extracted embedding similarities makes it sensitive to model quality. In the future, we could study the ensemble of foundation models to improve reliability and ensure gains on MonoScore faithfully reflect semantic consistency.

\section*{Impact Statement}
    This paper presents work whose goal is to advance the field of Machine Learning. There are many potential societal consequences of our work, none which we feel must be specifically highlighted here.

\nocite{langley00}

\bibliography{main}
\bibliographystyle{icml2026}

\newpage
\appendix
\onecolumn



\section{Proof of Proposition~\ref{prop:monoscore-equivalence}}
\label{app:proof-monoscore-equivalence}

\begin{proof}[Proof of Proposition~\ref{prop:monoscore-equivalence}]
Consider a single neuron $k$. Let $\tilde a_{ki}$ be the min--max normalized activation
of neuron $k$ on image $x_i$, and let $h_i \in \mathbb{R}^d$ be the
unit-normalized embedding $h_i = E(x_i) / \|E(x_i)\|_2$.

The pairwise MonoScore (Algorithm~\ref{alg:monoscore-baseline}) is
\[
    MS^{\text{pair}}_k
    =
    \frac{\displaystyle\sum_{1 \le i < j \le N}
        \tilde a_{ki}\,\tilde a_{kj}\, h_i^\top h_j}
         {\displaystyle\sum_{1 \le i < j \le N}
        \tilde a_{ki}\,\tilde a_{kj}}.
\]

The linear-time algorithm (Algorithm~\ref{alg:monoscore-linear}) maintains
\[
    u_k = \sum_{i=1}^N \tilde a_{ki}, \qquad
    v_k = \sum_{i=1}^N \tilde a_{ki}^2, \qquad
    w_{:k} = \sum_{i=1}^N \tilde a_{ki} h_i ,
\]
and defines
\[
    q_k = \|w_{:k}\|_2^2 .
\]

Expanding $q_k$ gives:
\[
    q_k = \|w_{:k}\|_2^2
    = \Big\|\sum_{i=1}^N \tilde a_{ki} h_i \Big\|_2^2
    = \sum_{i=1}^N \sum_{j=1}^N
      \tilde a_{ki}\,\tilde a_{kj}\, h_i^\top h_j.
\]
Splitting into diagonal and off-diagonal terms (given the symmetry of the matrix),
\[
    q_k
    = \sum_{i=1}^N \tilde a_{ki}^2
      + 2 \sum_{1 \le i < j \le N}
        \tilde a_{ki}\,\tilde a_{kj}\, h_i^\top h_j,
\]
so
\[
    \frac{1}{2}\big(q_k - v_k\big)
    = \sum_{1 \le i < j \le N}
      \tilde a_{ki}\,\tilde a_{kj}\, h_i^\top h_j.
\]

Similarly,
\begin{align*}
u_k^2
  &= \Big(\sum_{i=1}^N \tilde a_{ki}\Big)^2 \\
  &= \sum_{i=1}^N \sum_{j=1}^N \tilde a_{ki}\,\tilde a_{kj} \\
  &= \sum_{i=1}^N \tilde a_{ki}^2
   + 2 \sum_{1 \le i < j \le N} \tilde a_{ki}\,\tilde a_{kj}.
\end{align*}
which implies
\[
    \frac{1}{2}\big(u_k^2 - v_k\big)
    = \sum_{1 \le i < j \le N} \tilde a_{ki}\,\tilde a_{kj}.
\]

By definition, Algorithm~\ref{alg:monoscore-linear} sets
\[
    \text{num}_k = \tfrac{1}{2}(q_k - v_k), \qquad
    \text{den}_k = \tfrac{1}{2}(u_k^2 - v_k),
\]
and returns $MS^{\text{lin}}_k = \text{num}_k / \text{den}_k$ when
$\text{den}_k > 0$, and $0$ otherwise. The identities above show that
$\text{num}_k$ and $\text{den}_k$ are exactly the numerator and denominator
of $MS^{\text{pair}}_k$. Thus $MS^{\text{lin}}_k = MS^{\text{pair}}_k$
for all $k$.
\end{proof}

\section{SAE training details}
\label{app:sae_training_details}

We build on publicly available PyTorch implementations of the SAEs we study, and add a common data pipeline, MonoLoss, and unified training and evaluation code. For TopK SAE and BatchTopK SAE, we use the reference implementations released by the original authors \cite{topksae, batchtopksae}. For the remaining SAE variants (Vanilla ReLU SAE and JumpReLU SAE), we use implementations from \cite{batchtopksae}. In all cases, we keep the architectural definitions from these repositories unchanged, and standardize only the optimization hyperparameters and the way MonoLoss is integrated.

We train all models on pre-extracted feature embeddings from frozen vision encoders, following \cite{nasiri2025sparc}. Specifically, we use CLIP ViT-L/14 \cite{clip}, SigLIP2 \cite{SigLIP2}, and a supervised ViT-L/16 \cite{vit}. We train on OpenImagesV7 \cite{OpenImages} using its standard train/validation/test splits, and on ImageNet-1K \cite{imagenet15russakovsky} using the training split (1.28M images) for training and the validation split (50K images) for evaluation. For each dataset split, we pre-compute image features once, store them in LMDB, and normalize every feature vector to unit $\ell_2$ norm prior to training. During training, features are loaded via a dataset wrapper that streams examples from disk and re-normalizes them to unit norm on the fly.

\paragraph{Architectures.}
On top of each frozen encoder, we train four SAE architectures with a dictionary size of $d=8192$ latents:
\begin{itemize}
    \item \textbf{TopK SAE} \cite{topksae}: a linear encoder--decoder with a TopK activation that keeps exactly $k = 64$ non-zero latents per example. We use tied weights by default (decoder is the transpose of the encoder) and optionally apply layer normalization on the input and output. For the MonoLoss runs in Figure~\ref{fig:mono_loss_comparison}, we set the MonoLoss coefficient on OpenImagesV7 to $\lambda = 0.5$ for CLIP-image features, $\lambda = 0.3$ for SigLIP2-image features, and $\lambda = 0.9$ for the supervised ViT encoder. On ImageNet-1K, we use $\lambda = 0.09$ (CLIP), $\lambda = 0.14$ (SigLIP2), and $\lambda = 0.09$ (ViT).
    
    \item \textbf{BatchTopK SAE} \cite{batchtopksae}: a variant where sparsity is enforced at the batch level rather than per example. We compute ReLU activations and then retain the top $k = 64$ activations \emph{per batch} (globally across all samples). For the MonoLoss configurations on OpenImagesV7, we use $\lambda = 5\times 10^{-4}$ for CLIP-image features, $\lambda = 2\times 10^{-4}$ for SigLIP2-image features, and $\lambda = 7\times 10^{-4}$ for the supervised ViT encoder. On ImageNet-1K, we use $\lambda = 3\times 10^{-4}$ (CLIP), $\lambda = 10^{-4}$ (SigLIP2), and $\lambda = 4\times 10^{-4}$ (ViT).

    \item \textbf{Vanilla ReLU SAE}: a standard sparse autoencoder with ReLU activations and an $\ell_1$ sparsity penalty on the latent codes. We fix the $\ell_1$ coefficient to $10^{-4}$ in all experiments. For MonoLoss runs on OpenImagesV7, we use $\lambda = 10^{-4}$ on CLIP-image features, $\lambda = 3\times 10^{-5}$ on SigLIP2-image features, and $\lambda = 10^{-4}$ on the supervised ViT encoder. On ImageNet-1K, we use $\lambda = 10^{-4}$ (CLIP), $\lambda = 3\times 10^{-5}$ (SigLIP2), and $\lambda = 9\times 10^{-5}$ (ViT).

    \item \textbf{JumpReLU SAE} \cite{jumprelu}: we use the JumpReLU variant from \cite{batchtopksae}, with learnable per-feature log-thresholds $\ell_k$ defining positive thresholds $\tau_k = \exp(\ell_k)$ and a JumpReLU activation applied to ReLU pre-activations. In all SAE experiments, we use $d = 8192$ latents, a JumpReLU bandwidth of $b = 10^{-3}$, and a sparsity loss coefficient of $10^{-3}$ on the JumpReLU activations. For the MonoLoss experiments on OpenImagesV7, we set $\lambda = 5\times 10^{-4}$ for CLIP-image features, $\lambda = 10^{-4}$ for SigLIP2-image features, and $\lambda = 5\times 10^{-4}$ for the supervised ViT encoder. On ImageNet-1K, we use $\lambda = 5\times 10^{-4}$ (CLIP), $\lambda = 10^{-4}$ (SigLIP2), and $\lambda = 7\times 10^{-4}$ (ViT).
    
\end{itemize}
All SAEs share the same linear encoder and decoder parameterization (plus biases), and all operate directly in the frozen-encoder feature space. The decoder columns are constrained to have unit $\ell_2$ norm by explicit renormalization after each gradient step; for untied decoders, we also project the gradient to be orthogonal to the current dictionary vectors to avoid shrinking their norms.

\paragraph{Optimization.}
We train each SAE for 50 epochs with a batch size of 2048 and dictionary size $d = 8192$. We use the Adam optimizer \cite{kingma2015adam} with a learning rate of $1\times 10^{-4}$ and $\epsilon = 6.25\times 10^{-10}$; all other Adam hyperparameters are left at their PyTorch defaults. We apply gradient clipping with a maximum global norm of $1.0$ before the optimizer step, and we do not use any learning-rate schedule or weight decay. For the BatchTopK, Vanilla, and JumpReLU SAEs, we follow the public \cite{batchtopksae} implementation for weight initialization and other training details, and for the TopK SAE, we follow the reference implementation from \cite{topksae}. For all experiments, we fix the random seed to 42 and train on a single GPU. 

For TopK SAEs, we additionally track, for each latent dimension, the number of optimization steps since it last fired on any sample in the current batch. When this counter exceeds a preset threshold (specified in training examples), we reinitialize that latent using the dead-neuron handling strategy from \cite{topksae}, and reset its inactivity counter. This prevents permanently dead features and keeps the effective dictionary size closer to $d$.

\paragraph{Evaluation metrics.}
Reconstruction quality is measured using the coefficient of determination $R^2$ on held-out data. Given ground-truth features $x$ and reconstructions $\hat x$, we compute per-dimension $R^2$ using streaming estimates of means and variances and report the mean $R^2$ across all feature dimensions. For monosemanticity, we report the mean MonoScore over all latents on the test split (OpenImagesV7) or validation split (ImageNet-1K), together with MonoScore curves where latents are sorted by decreasing MonoScore and the index is normalized to $[0,1]$. All reconstruction and monosemanticity results in Section~\ref{sec:SAE} are obtained from the same checkpoints and training procedure described above, differing only in the choice of architecture, dataset, and MonoLoss coefficient $\lambda$.

\section{Qualitative samples for SAE}
\label{app:sae_qualitative_examples}
In this section, we show additional neuron visualizations for BatchTopK and TopK SAEs, comparing models trained with and without MonoLoss. All the results in this section are for the models trained on OpenImages. For selected latents, we rank validation images by activation and display top, middle, and bottom positively activated samples. Across figures \ref{fig:qualitative_batch_topk_rank9}–\ref{fig:qualitative_batch_topk_rank6015} and \ref{fig:qualitative_topk_rank66}–\ref{fig:qualitative_topk_rank6678}, baseline SAEs often look clean only at the very top activations and become polysemantic lower in the range, mixing unrelated concepts. The corresponding MonoLoss-trained latents typically maintain a single, coherent concept (e.g.\ cats, jellyfish, ships) over a much larger portion of their activation spectrum, visually confirming the quantitative MonoScore and Class Purity improvements reported in the main text.
\begin{figure*}[ht]
\centering
\includegraphics[width=\textwidth]{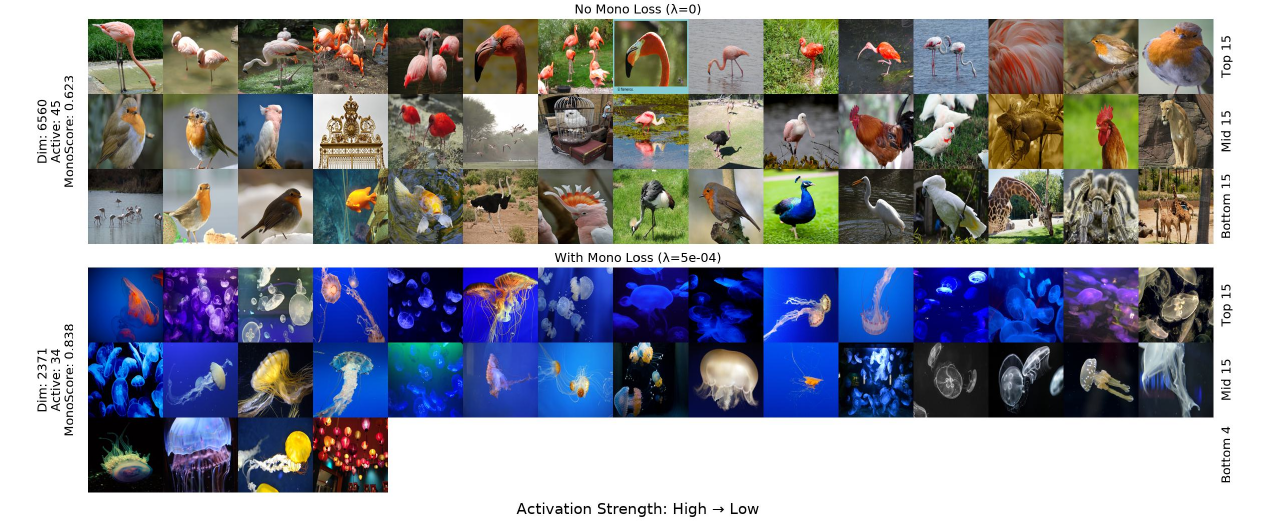}
\caption{
Comparison of activation patterns for BatchTopK SAEs at rank 9. The latent from the baseline (no MonoLoss) triggers strongly on ``flamingos" but exhibits polysemantic behavior in weaker activations, mixing in unrelated birds and scenic backgrounds. In contrast, the latent from the model trained with MonoLoss  maintains high semantic purity, activating exclusively on ``jellyfish" across its full dynamic range.
}
\label{fig:qualitative_batch_topk_rank9}
\end{figure*}

\begin{figure*}[ht]
\centering
\includegraphics[width=\textwidth]{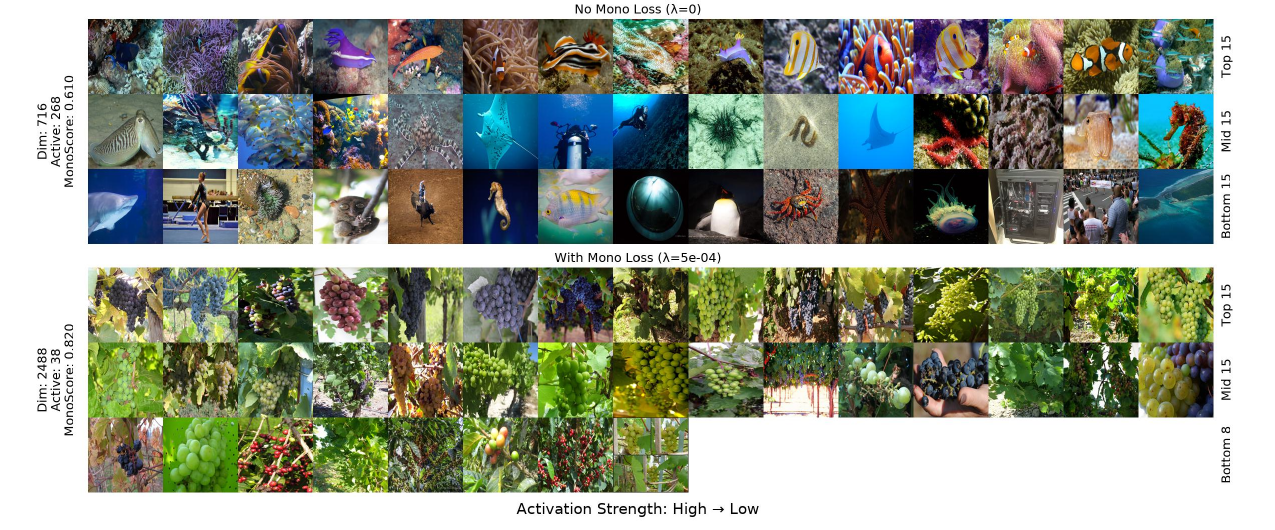}
\caption{
Comparison of activation patterns for BatchTopK SAEs at rank 16. The latent from the baseline (no MonoLoss) targets ``marine life and coral reefs" but becomes less monosemantic in weaker activations, firing on unrelated concepts such as ``gymnasts and computer hardware". In contrast, the latent from the model trained with MonoLoss demonstrates robust coherence, consistently identifying ``grapes and vineyards" across its full dynamic range.
}
\label{fig:qualitative_batch_topk_rank16}
\end{figure*}

\begin{figure*}[ht]
\centering
\includegraphics[width=\textwidth]{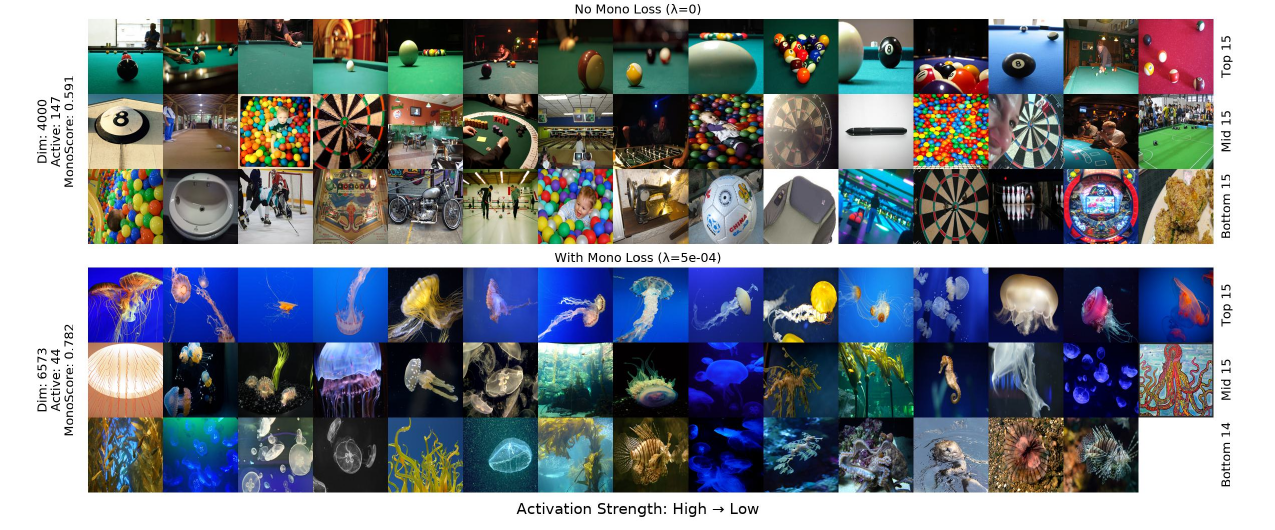}
\caption{
Comparison of activation patterns for BatchTopK SAEs at rank 46. The latent from the baseline (no MonoLoss) targets ``billiards" but becomes less monosemantic in weaker activations, firing on unrelated concepts such as ``dartboards and ball pits". In contrast, the latent from the model trained with MonoLoss demonstrates robust coherence, consistently identifying ``jellyfish and marine life" across its full dynamic range.
}
\label{fig:qualitative_batch_topk_rank46}
\end{figure*}

\begin{figure*}[ht]
\centering
\includegraphics[width=\textwidth]{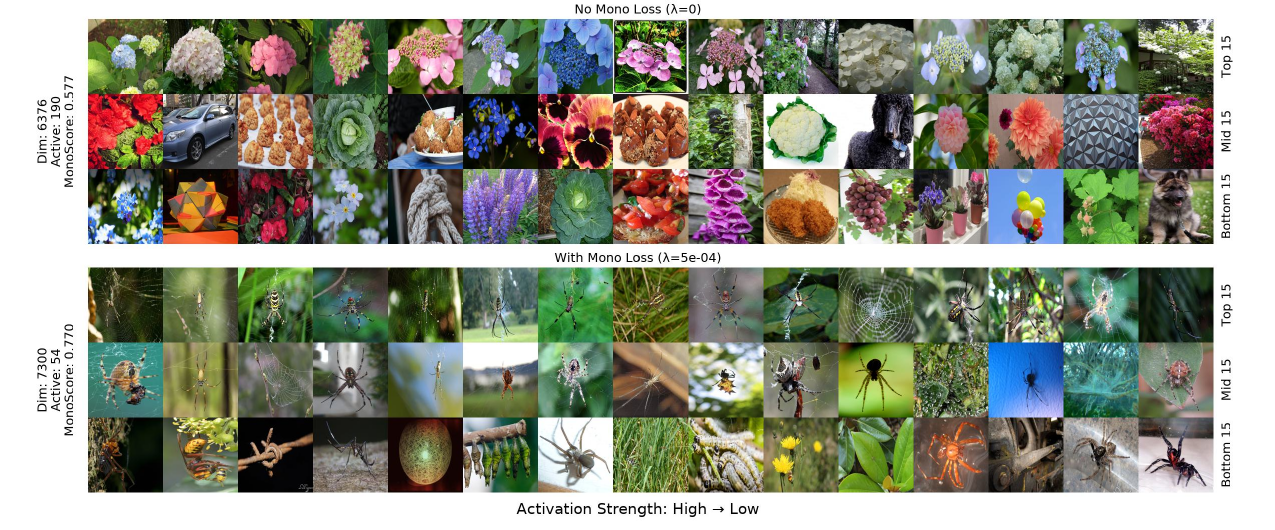}
\caption{
Comparison of activation patterns for BatchTopK SAEs at rank 68. The latent from the baseline (no MonoLoss) targets ``hydrangeas" but exhibits severe polysemantic drift in weaker activations, mixing in ``cars, food, and animals." In contrast, the latent from the model trained with MonoLoss demonstrates higher coherence, primarily targeting ``spiders" across the activation spectrum.
}
\label{fig:qualitative_batch_topk_rank68}
\end{figure*}

\begin{figure*}[ht]
\centering
\includegraphics[width=\textwidth]{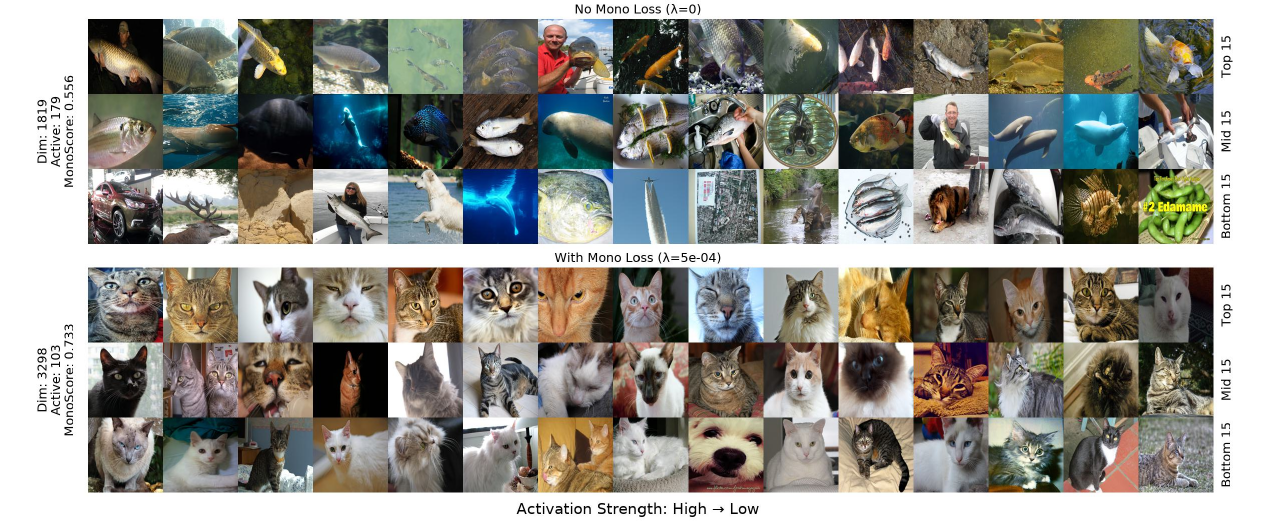}
\caption{
Comparison of activation patterns for BatchTopK SAEs at rank 145. The latent from the baseline (no MonoLoss) targets ``fish" but displays substantial polysemantic drift in weaker activations, capturing unrelated concepts like ``cars, deer, and vegetables." In contrast, the latent from the model trained with MonoLoss remains coherent, activating on ``cats" across its full dynamic range.
}
\label{fig:qualitative_batch_topk_rank145}
\end{figure*}

\begin{figure*}[ht]
    \centering
    \includegraphics[width=\textwidth]{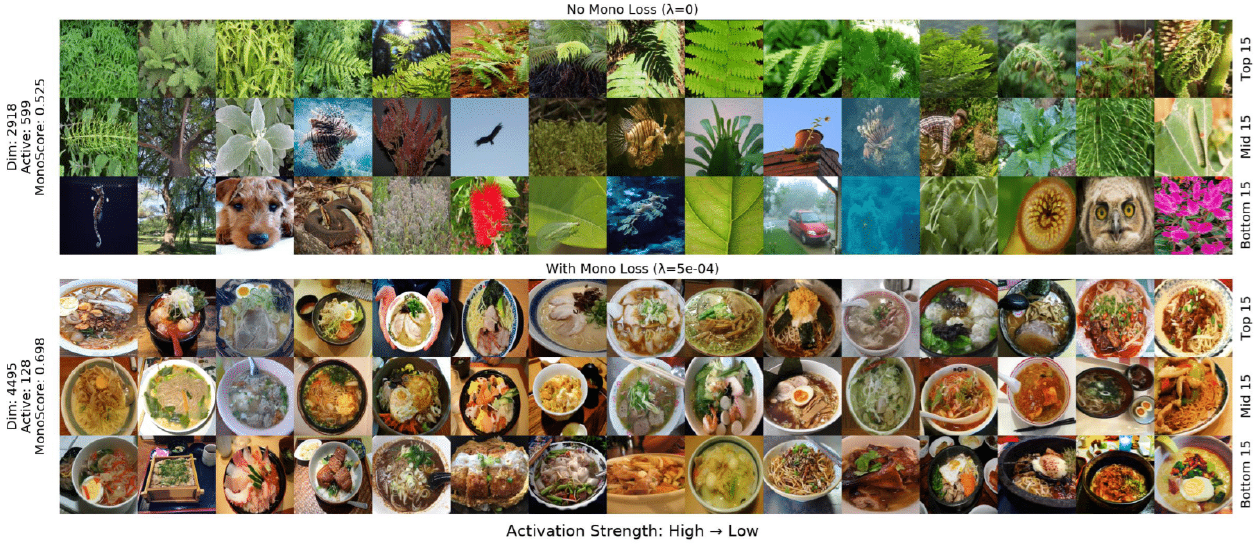}
    \caption{
    Comparison of activation patterns for BatchTopK SAEs at rank 327. The latent from the baseline (no MonoLoss) targets “ferns and green foliage” but loses semantic specificity in weaker activations, mixing in unrelated images like “dogs, owls, and vehicles.” In contrast, the latent from the model trained with MonoLoss focuses on “ramen and noodle soups,” while maintaining high semantic consistency with “similar Asian cuisine and food bowls” even in the lower range.}

\end{figure*}

\begin{figure*}[ht]
\centering
\includegraphics[width=\textwidth]{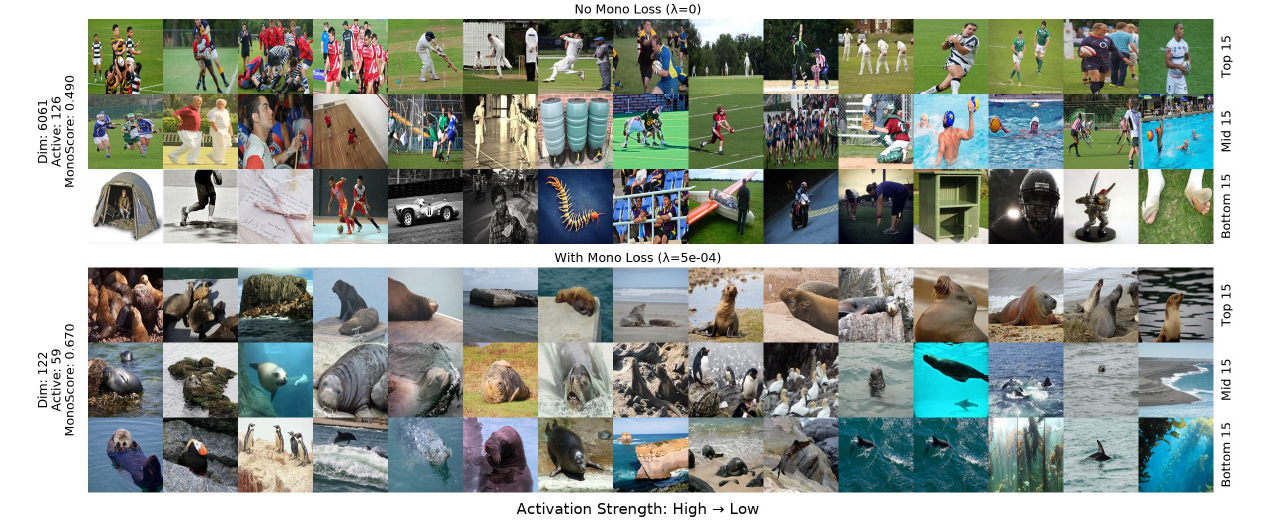}
\caption{
Comparison of activation patterns for BatchTopK SAEs at rank 600. The latent from the baseline (no MonoLoss) targets ``field sports" like rugby but loses semantic specificity in weaker activations, mixing in unrelated images like ``centipedes and helmets." In contrast, the latent from the model trained with MonoLoss focuses on ``seals and sea lions," while also capturing related concepts like ``penguins and coastal environments" in the lower range.
}
\label{fig:qualitative_batch_topk_rank600}
\end{figure*}

\begin{figure*}[ht]
\centering
\includegraphics[width=\textwidth]{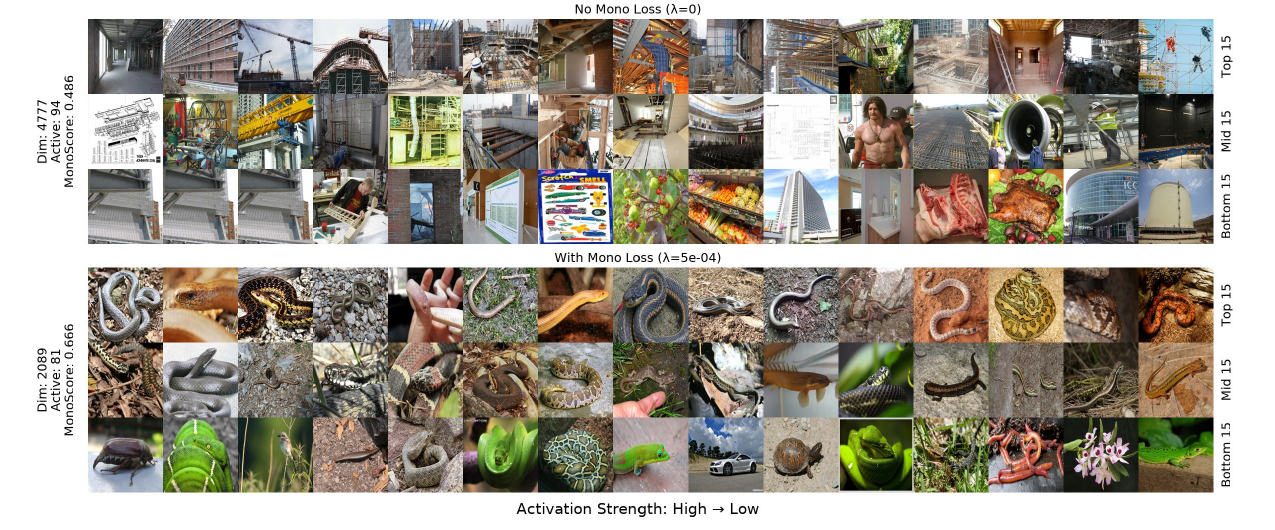}
\caption{
Comparison of activation patterns for BatchTopK SAEs at rank 636. The latent from the baseline (no MonoLoss) targets ``construction sites" but loses semantic coherence in weaker activations, drifting into unrelated concepts like ``food and meat." In contrast, the latent from the model trained with MonoLoss primarily targets ``snakes and reptiles," retaining the core concept across the majority of its dynamic range.
}
\label{fig:qualitative_batch_topk_rank636}
\end{figure*}

\begin{figure*}[ht]
\centering
\includegraphics[width=\textwidth]{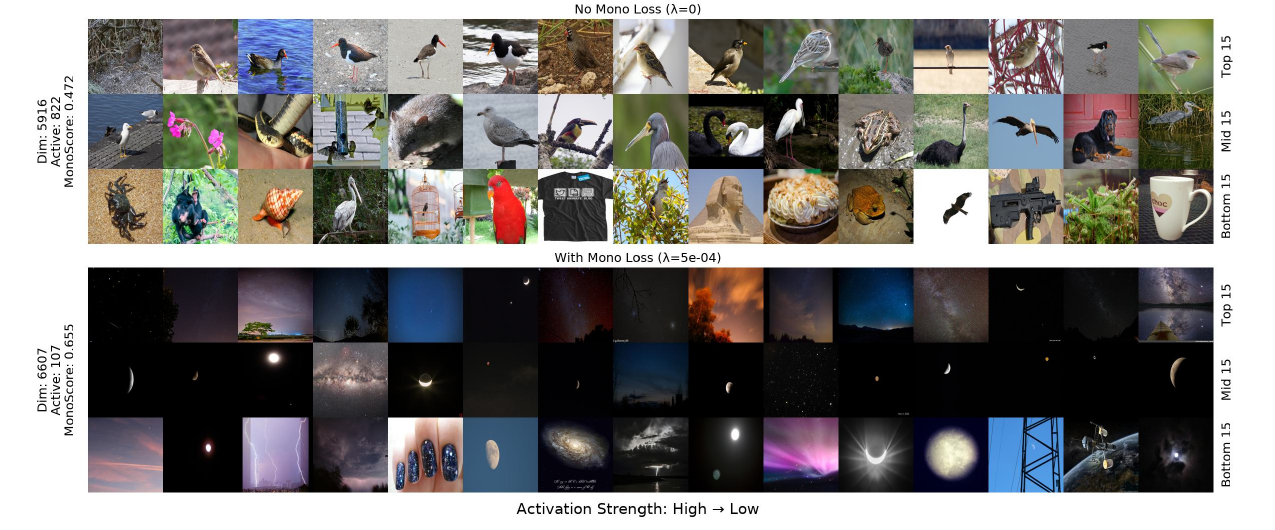}
\caption{
Comparison of activation patterns for BatchTopK SAEs at rank 789. The latent from the baseline (no MonoLoss) targets ``birds" but loses specificity in weaker activations, mixing in diverse animals and unrelated objects like ``guns and food." In contrast, the latent from the model trained with MonoLoss primarily targets ``night sky and celestial bodies," retaining the thematic concept across the majority of its dynamic range.
}
\label{fig:qualitative_batch_topk_rank789}
\end{figure*}

\begin{figure*}[ht]
\centering
\includegraphics[width=\textwidth]{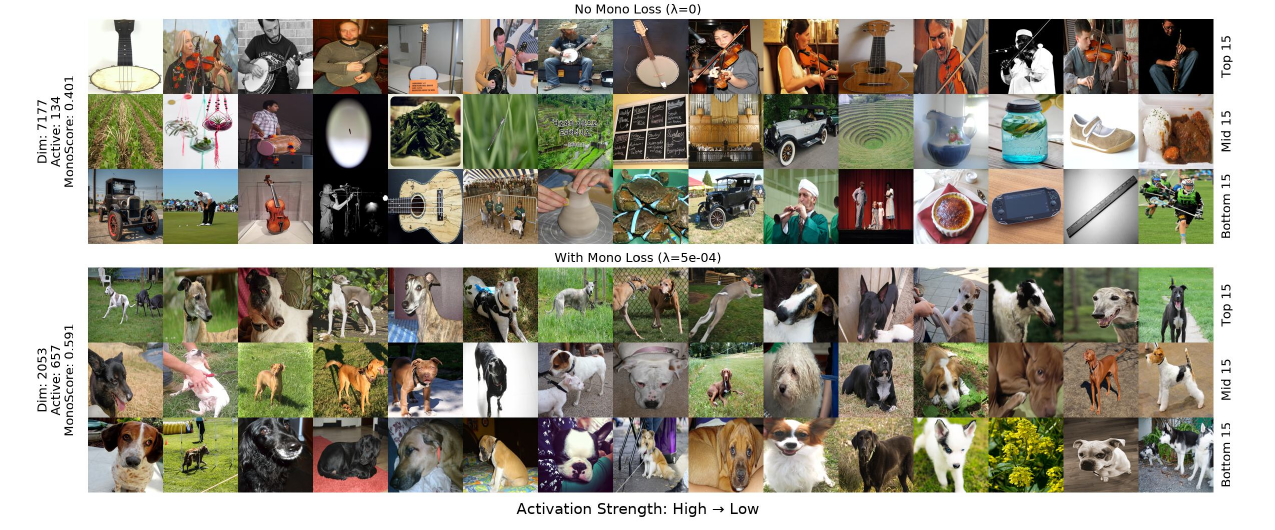}
\caption{
Comparison of activation patterns for BatchTopK SAEs at rank 2175. The latent from the baseline (no MonoLoss) targets ``string instruments" but exhibits polysemantic drift in weaker activations, mixing in unrelated objects like ``cars and food." In contrast, the latent from the model trained with MonoLoss demonstrates robust coherence, consistently identifying ``dogs" across the majority of its dynamic range.
}
\label{fig:qualitative_batch_topk_rank2175}
\end{figure*}

\begin{figure*}[ht]
\centering
\includegraphics[width=\textwidth]{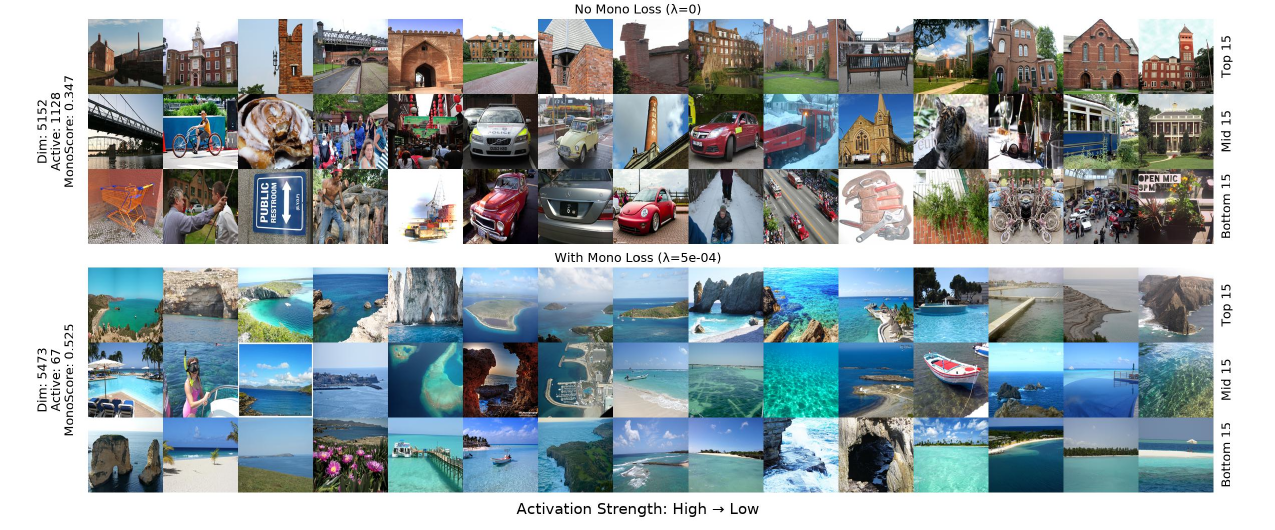}
\caption{
Comparison of activation patterns for BatchTopK SAEs at rank 4317. The latent from the baseline (no MonoLoss) targets ``brick buildings" but exhibits polysemantic drift in weaker activations, mixing in unrelated concepts like ``cars and animals." In contrast, the latent from the model trained with MonoLoss demonstrates robust coherence, consistently identifying ``coastal landscapes and ocean scenes" across its full dynamic range.
}
\label{fig:qualitative_batch_topk_rank4317}
\end{figure*}

\begin{figure*}[ht]
\centering
\includegraphics[width=\textwidth]{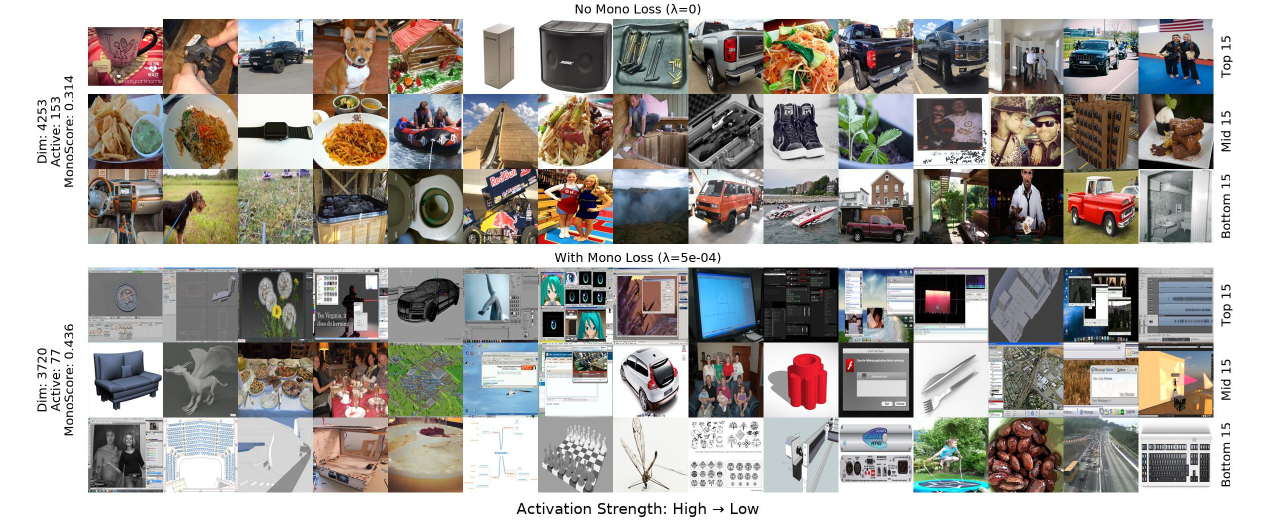}
\caption{
Comparison of activation patterns for BatchTopK SAEs at rank 6015 (lower-quality tail). The latent from the baseline (no MonoLoss) is highly polysemantic, triggering on an incoherent mix of ``trucks, food, and household objects." In contrast, even at this lower rank, the latent from the model trained with MonoLoss retains a discernible theme, primarily targeting ``computer interfaces and 3D modeling software," albeit with some noise in the weaker activations.
}
\label{fig:qualitative_batch_topk_rank6015}
\end{figure*}


\begin{figure*}[ht]
\centering
\includegraphics[width=\textwidth]{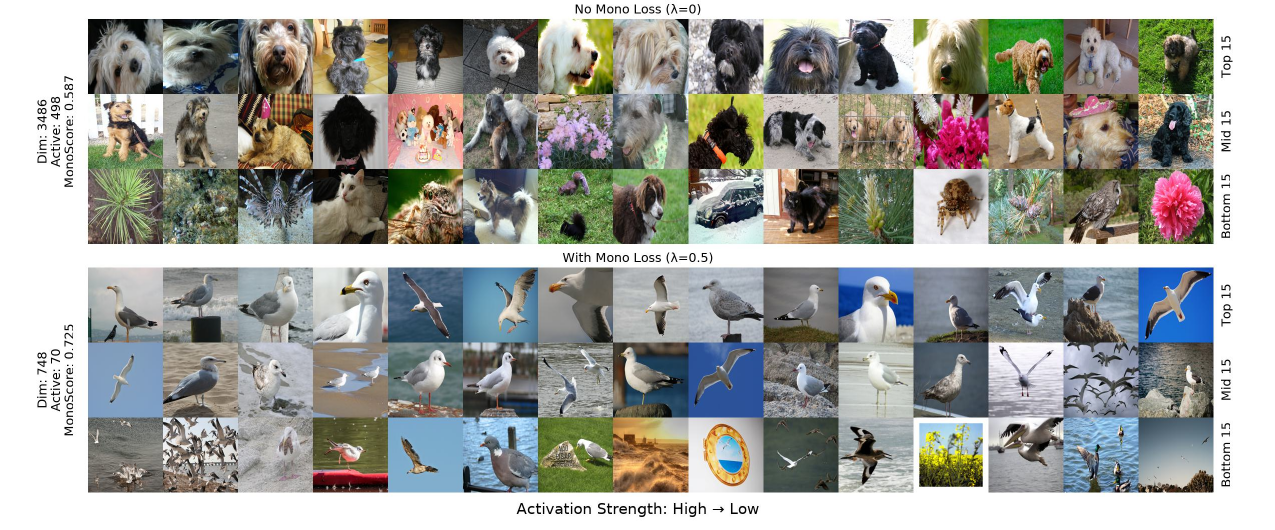}
\caption{
Comparison of activation patterns for TopK SAEs at rank 66. The latent from the baseline (no MonoLoss) targets ``dogs" but becomes polysemantic in weaker activations, mixing in unrelated concepts like ``pine needles, spiders, and cars." In contrast, the latent from the model trained with MonoLoss demonstrates robust coherence, consistently identifying ``seagulls" across the majority of its dynamic range.}
\label{fig:qualitative_topk_rank66}
\end{figure*}

\begin{figure*}[ht]
\centering
\includegraphics[width=\textwidth]{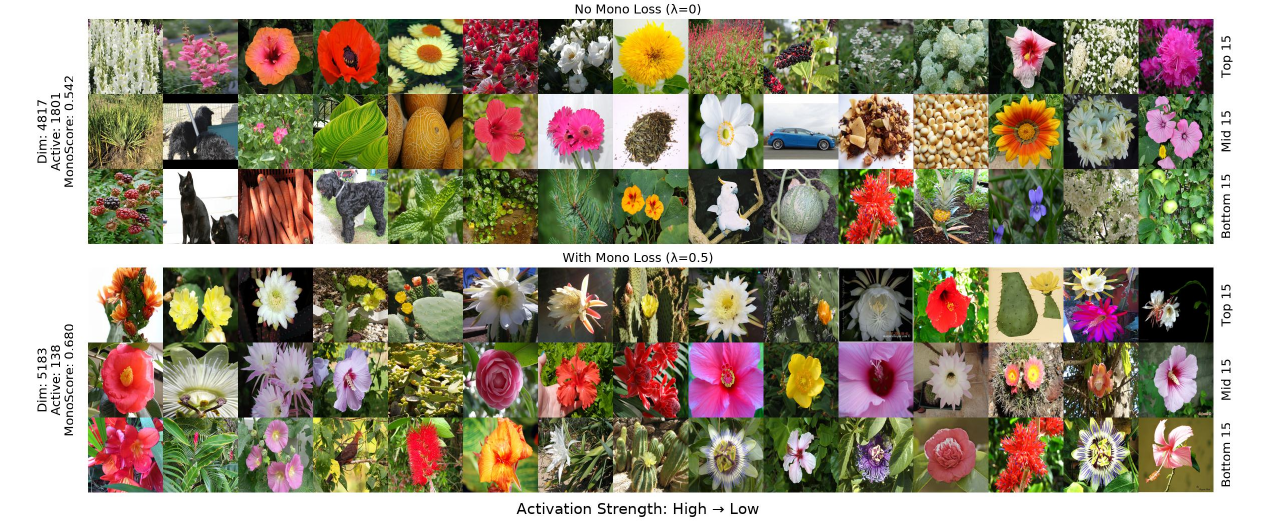}
\caption{
Comparison of activation patterns for TopK SAEs at rank 243. The latent from the baseline (no MonoLoss) targets ``flowers" but exhibits polysemantic drift in weaker activations, mixing in unrelated concepts like ``black dogs, cars, and produce." In contrast, the latent from the model trained with MonoLoss demonstrates robust coherence, consistently identifying ``flowers and blooms" across its full dynamic range.
}
\label{fig:qualitative_topk_rank243}
\end{figure*}

\begin{figure*}[ht]
\centering
\includegraphics[width=\textwidth]{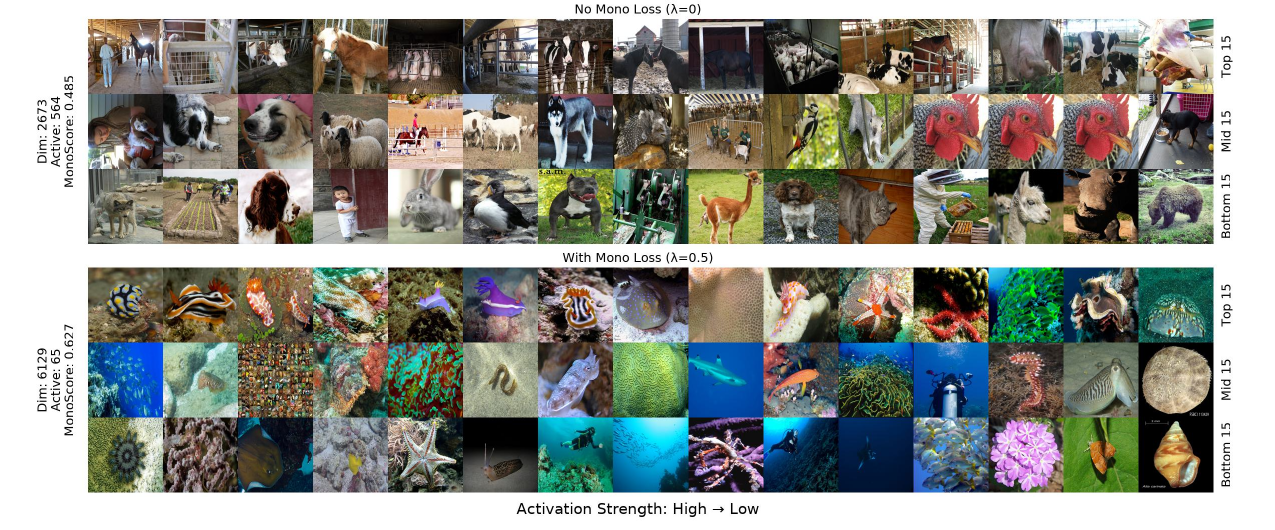}
\caption{
Comparison of activation patterns for TopK SAEs at rank 756. The latent from the baseline (no MonoLoss) targets ``livestock" but loses semantic specificity in weaker activations, mixing in diverse animals like ``dogs, bears, and birds." In contrast, the latent from the model trained with MonoLoss consistently identifies ``marine life and coral reefs," maintaining a coherent underwater theme across its full dynamic range.
}
\label{fig:qualitative_topk_rank756}
\end{figure*}

\begin{figure*}[ht]
\centering
\includegraphics[width=\textwidth]{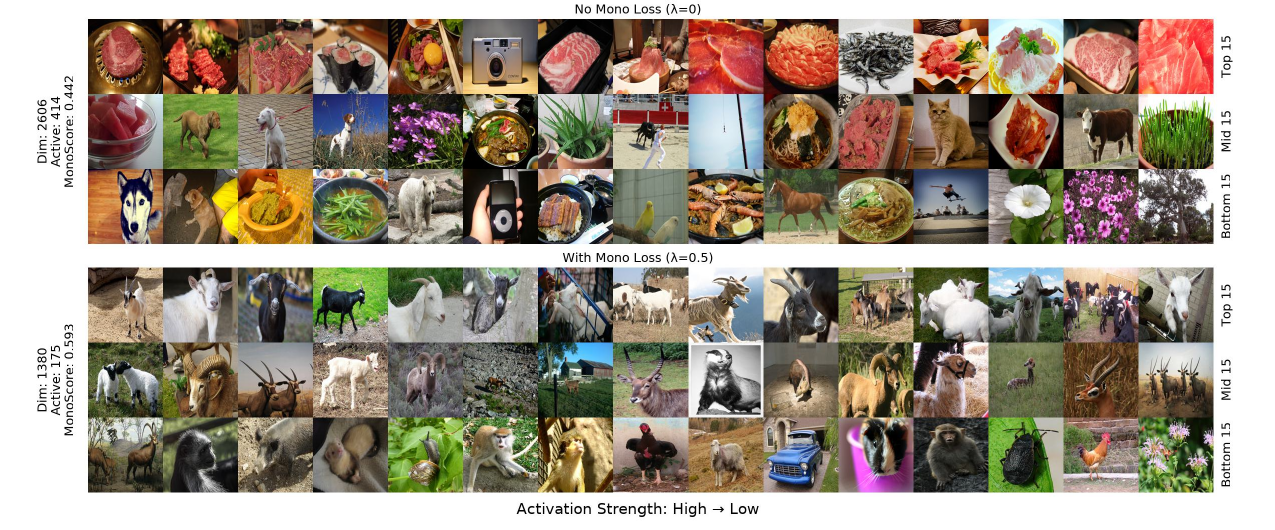}
\caption{
Comparison of activation patterns for TopK SAEs at rank 1428. The latent from the baseline (no MonoLoss) targets ``meat and food" but rapidly degrades into an incoherent mix of ``dogs, electronics, and plants." The latent from the model trained with MonoLoss targets ``goats," but also exhibits semantic drift in the lowest activations, capturing unrelated images such as ``trucks, monkeys, and insects."
}
\label{fig:qualitative_topk_rank1428}
\end{figure*}

\begin{figure*}[ht]
\centering
\includegraphics[width=\textwidth]{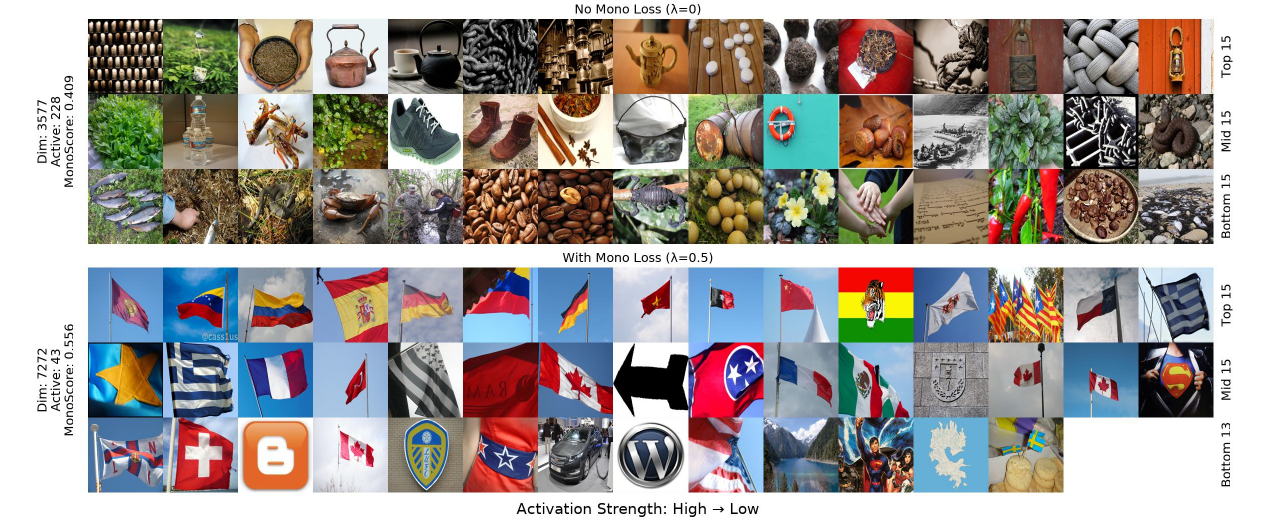}
\caption{
Comparison of activation patterns for TopK SAEs at rank 2361. The latent from the baseline (no MonoLoss) lacks a clear semantic concept, triggering on an incoherent mix of multiple concepts. In contrast, the latent from the model trained with MonoLoss targets ``flags," consistently identifying national flags and related ``logos and symbols" across most of its dynamic range.
}
\label{fig:qualitative_topk_rank2361}
\end{figure*}

\begin{figure*}[ht]
\centering
\includegraphics[width=\textwidth]{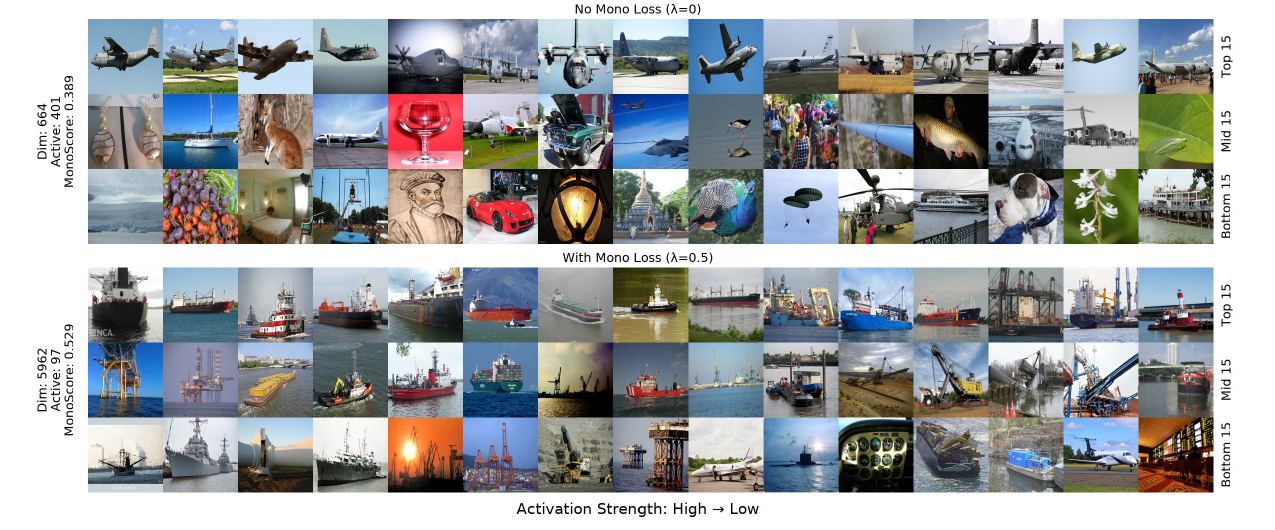}
\caption{
Comparison of activation patterns for TopK SAEs at rank 3215. The latent from the baseline (no MonoLoss) targets ``transport aircraft" but exhibits severe polysemantic drift in weaker activations, mixing in unrelated concepts like ``animals, jewelry, and landscapes." In contrast, the latent from the model trained with MonoLoss demonstrates stronger coherence, primarily targeting ``ships and maritime industry" across the majority of its dynamic range.
}
\label{fig:qualitative_topk_rank3215}
\end{figure*}

\begin{figure*}[ht]
\centering
\includegraphics[width=\textwidth]{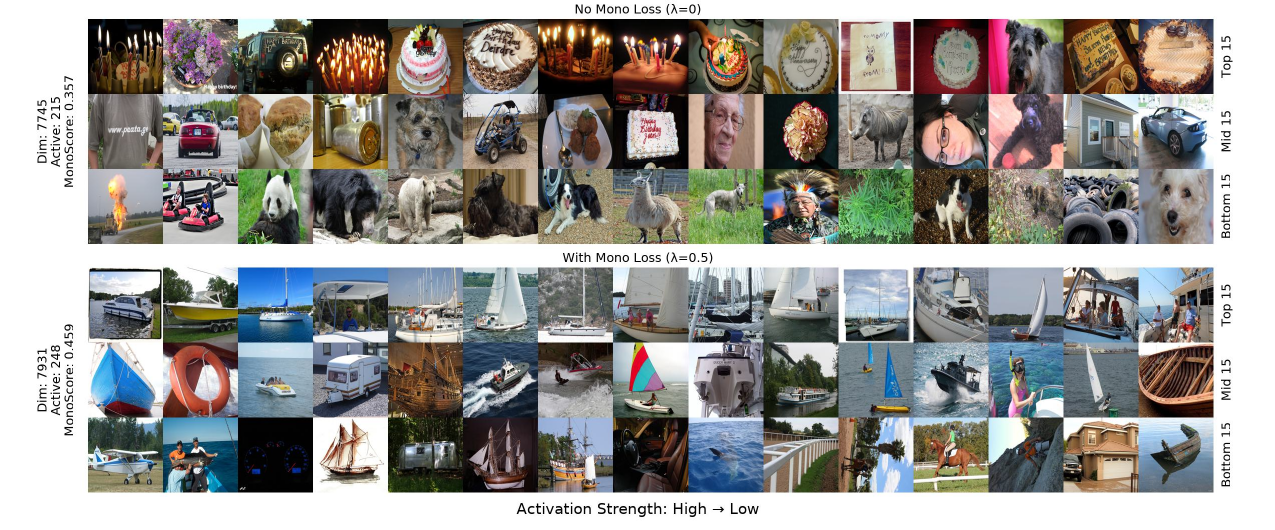}
\caption{
Comparison of activation patterns for TopK SAEs at rank 5324. The latent from the baseline (no MonoLoss) targets ``birthday cakes" but is highly polysemantic, mixing in unrelated images like ``dogs and cars." The latent from the model trained with MonoLoss targets ``boats and sailboats," but also exhibits some semantic drift in the lowest activations, capturing unrelated objects like ``horses and houses."
}
\label{fig:qualitative_topk_rank5324}
\end{figure*}

\begin{figure*}[ht]
\centering
\includegraphics[width=\textwidth]{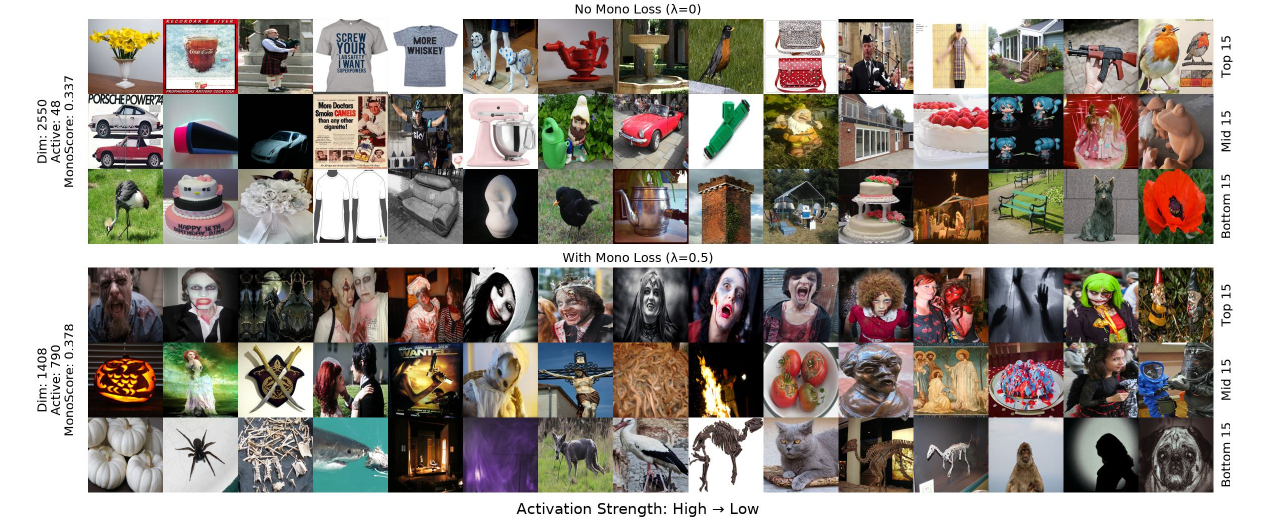}
\caption{
Comparison of activation patterns for TopK SAEs at rank 6678. The latent from the baseline (no MonoLoss) appears incoherent, triggering on a random assortment of ``clothing, cars, and furniture." In contrast, the latent from the model trained with MonoLoss targets ``horror and Halloween themes" (zombies, pumpkins), although it exhibits significant semantic drift in the weaker activations, capturing ``religious art and animals."}
\label{fig:qualitative_topk_rank6678}
\end{figure*}

\clearpage 

\section{Additional Monosemanticity Score Profiles}
\label{app:additional_ms_plots}


\noindent\begin{minipage}{\columnwidth}
\textbf{Monosemanticity across different $\lambda$ values.}
For a fixed encoder--SAE pair and a given MonoLoss coefficient $\lambda$, we compute a MonoScore for each of the $d = 8192$ latents. We then sort these scores in decreasing order and plot them as a function of a \emph{normalized latent index} in $[0,1]$: the left end of the curve corresponds to the highest-scoring (most monosemantic) latent and the right end to the lowest-scoring (least monosemantic) latent. Repeating this procedure for different values of $\lambda$ and overlaying the resulting curves gives a profile of how monosemanticity is distributed across latents as the strength of MonoLoss is varied. Examples for BatchTopK SAEs with CLIP, SigLIP2, and ViT encoders are shown in Figures~\ref{fig:latents_ms_batchtopk_clip}--\ref{fig:latents_ms_batchtopk_vit}.
\end{minipage}

\begin{figure}[ht]
    \centering
    \includegraphics[width=0.5\linewidth]{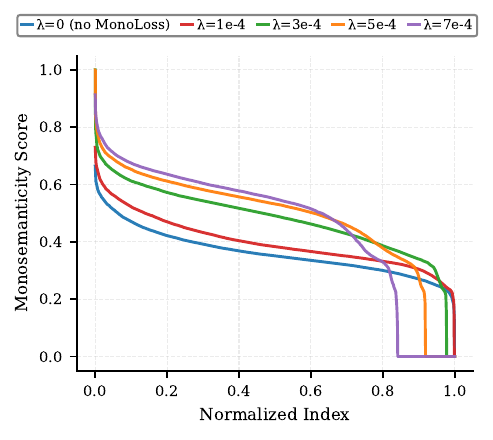}
    \caption{
        Effect of the monosemanticity loss coefficient ($\lambda$) on latent monosemanticity scores for BatchTopK SAE \cite{batchtopksae} ($k=64$) trained on CLIP image features. Latents are sorted by decreasing score. Increasing $\lambda$ improves monosemanticity across the distribution (upward shift) but reduces the number of interpretable features, as the lowest-ranking latents drop to zero score.
    }
    \label{fig:latents_ms_batchtopk_clip}
\end{figure}

\begin{figure}[ht]
    \centering
    \includegraphics[width=0.5\linewidth]{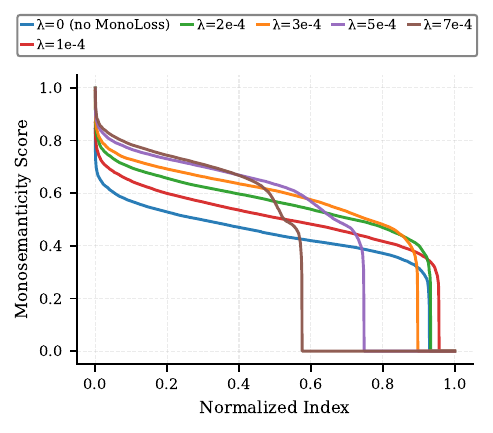}
    \caption{
        Effect of the monosemanticity loss coefficient ($\lambda$) on latent monosemanticity scores for BatchTopK SAE ($k=64$) trained on SigLIP2 image features. Latents are sorted by decreasing score. Higher $\lambda$ values yield significantly more monosemantic latents in the top half of the distribution. However, this comes at a steeper cost than in CLIP, with nearly 40\% of latents dropping to zero score under the strongest regularization ($\lambda=7\text{e-}4$).
}
    \label{fig:latents_ms_batchtopk_siglip2}
\end{figure}

\begin{figure}[H]
\centering
\includegraphics[width=0.5\linewidth]{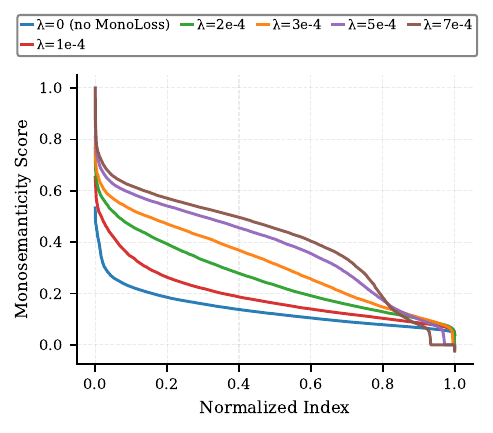}
\caption{
Effect of the monosemanticity loss coefficient ($\lambda$) on latent monosemanticity scores for BatchTopK SAE ($k=64$) trained on supervised ViT features. Latents are sorted by decreasing score. In this setting, MonoLoss produces the most dramatic relative improvement, lifting the entire distribution significantly above the baseline. Notably, this gain comes with minimal latent collapse, as the optimized latents remain active and interpretable across nearly the entire dictionary range.
}
\label{fig:latents_ms_batchtopk_vit}
\end{figure}

\section{Finetune experiment settings}
\label{app:Finetune_settings}

    \textbf{Datasets.} We employed ImageNet-1K, CIFAR-10, and CIFAR-100 for finetuning with standard splits. ImageNet-1K includes 1,281,167 training images and 50,000 validation images with 1,000 classes. CIFAR-10/100 consists of 50000 training images and 10000 test images with 10/100 categories. \\
    \noindent\textbf{Hyperparameters.} For CLIP-ViT-B/32, we adopted LoRA to efficiently finetune the model with a rank of 16 and alpha of 32 for the query and value projection matrices of all layers.    
    The details for experiments are as follow:
    \begin{table}[h]
        \centering
        \small
        \begin{tabular} {c cc cc}
            \toprule
            \multirow{2}{*}{\textbf{Setting}} &  \multicolumn{2}{c}{\textbf{ResNet-50}} & \multicolumn{2}{c}{\textbf{CLIP-ViT-B/32}} \\ 
            \cline{2-5}  & 
            ImangeNet-1K & CIFAR-10/100 & ImangeNet-1K & CIFAR-10/100 \\         
            \hline
            Learning rate & 1e-3           & 1e-1        & 1e-2             &      1e-2  \\
            Optimizer & SGD            & SGD         & SGD              & SGD            \\
            Epoch    & 90             & 90          & 90               & 90             \\
            Scheduler & Cosine         & Cosine      & Constant         & Constant       \\
            Batch size & 1024           & 1024        & 1024             & 1024           \\
            Warming epochs & 5              & 0           & 0                & 0              \\
            Weight decay  & 0.00002        & 0.00002     & 0.0001           & 0.0001   \\     
            \bottomrule
        \end{tabular}
        \caption{Training hyper-parameters.}
        \label{tab:training_config}
    \end{table}
    
    \noindent\textbf{Cosine similarity calculation.} In MonoScore, the metric requires the cosine similarity matrices and relevance matrices. To make the metric focus only on activation during training, we pre-computed the similarity matrices from the frozen CLIP-ViT-B/32 and retained them during training without updating.

\section{More qualitative results of finetune experiments}
\label{app:finetuning_qualitative}

    We studies more qualitative evaluations of finetuning ResNet-50 and CLIP-ViT-B/32 on ImageNet-1K, CIFAR-10, and CIFAR-10 (Figure \ref{fig:cifar10_rn50_10} - \ref{fig:cifar100_clip_vit_b_32_7}). In each figure, we select top-15, middle-15, and bot-15 of two dimensions with the same rank based on MonoScore from two settings (without MonoLoss and with MonoLoss). Overall, the qualitative results show that our proposed loss increases the homogeneity of concepts in the activating images.

    \begin{figure*}[ht]
        \centering
        \includegraphics[width=\textwidth]{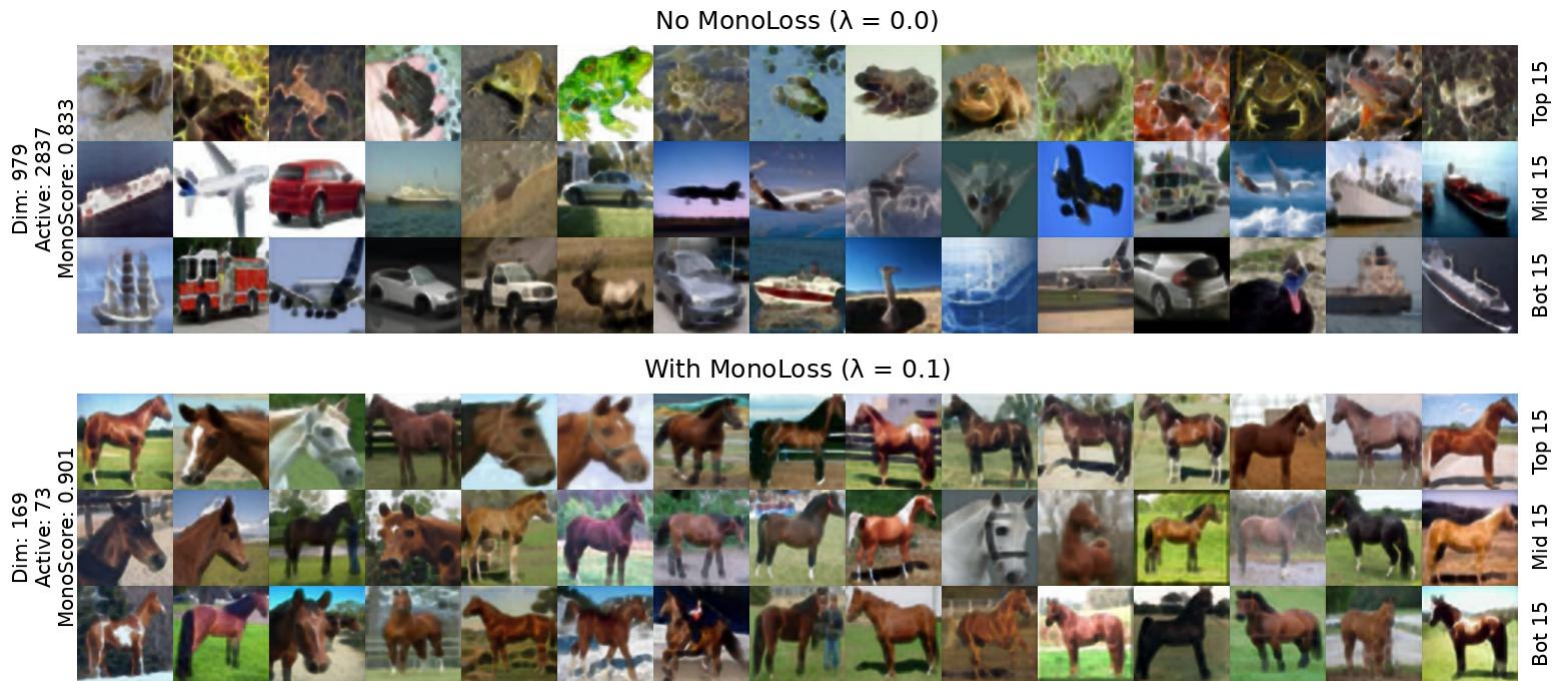}
        \caption{
        Comparison of activation patterns with and without monosemanticity loss for ResNet50 on CIFAR-10 from the same rank ($11$ out of $2048$). For each latent, we show top, middle, and bottom positively activated samples, ordered by activation strength from high to low. The top three rows finetuning without MonoLoss show diverse categories (frogs, ships, airplanes,...), whereas the bottom three rows finetuning with MonoLoss present a single concept of horses. Moreover, the number of activating images with MonoLoss is only 73, which means that the latent selectively activates for the corresponding concepts. Additional examples are in the Supplementary Material \ref{app:finetuning_qualitative}.
        }
        \label{fig:cifar10_rn50_10}
    \end{figure*}


    \begin{figure*}[ht]
        \centering
        \includegraphics[width=\textwidth]{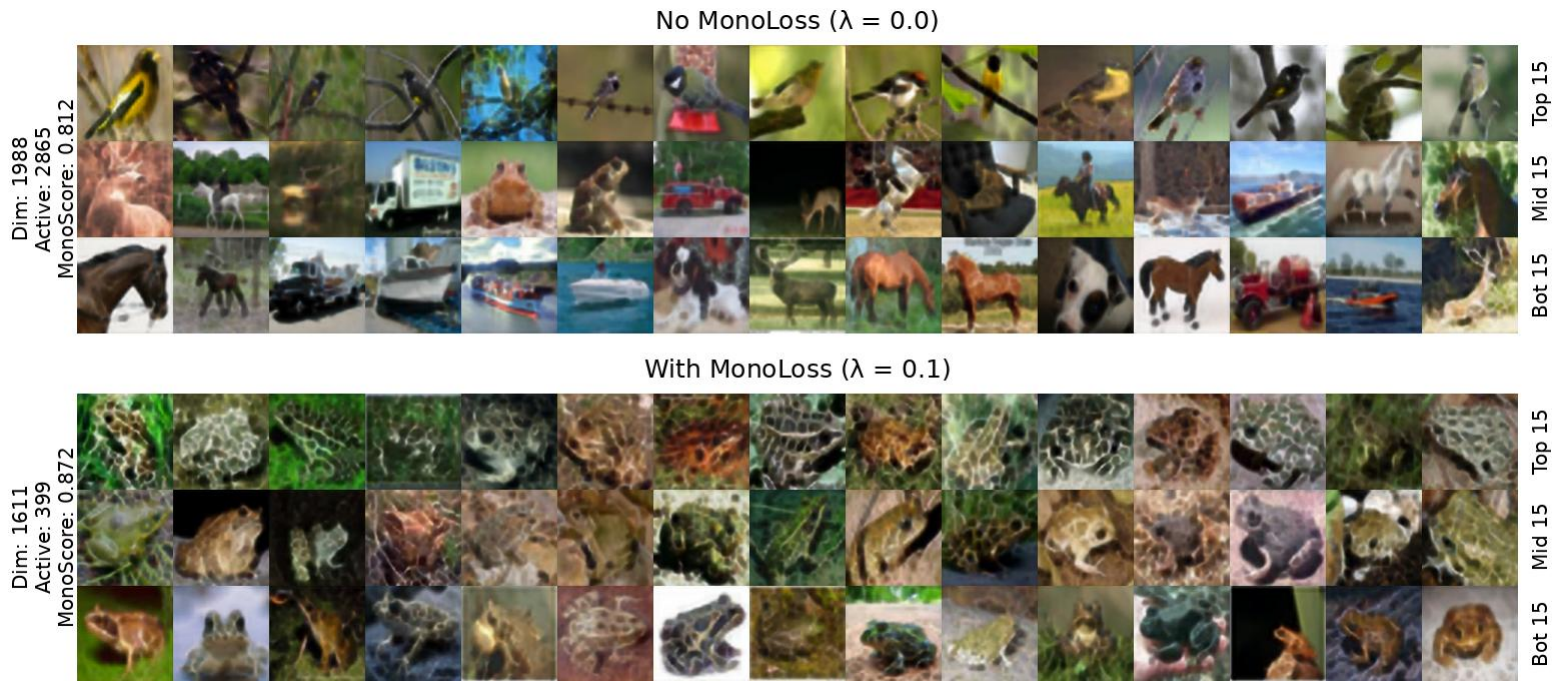}
        \caption{
        Comparison of activation patterns with and without monosemanticity loss for ResNet50 on CIFAR-10 from the same rank ($101$ out of $2048$). For each latent, we show top, middle, and bottom positively activated samples, ordered by activation strength from high to low. The top three rows finetuning without MonoLoss show diverse categories (birds, ships, horses,...), whereas the bottom three rows finetuning with MonoLoss present a single concept of frogs. 
        }
        \label{fig:cifar10_rn50_100}
    \end{figure*}

    \begin{figure*}[ht]
        \centering
        \includegraphics[width=\textwidth]{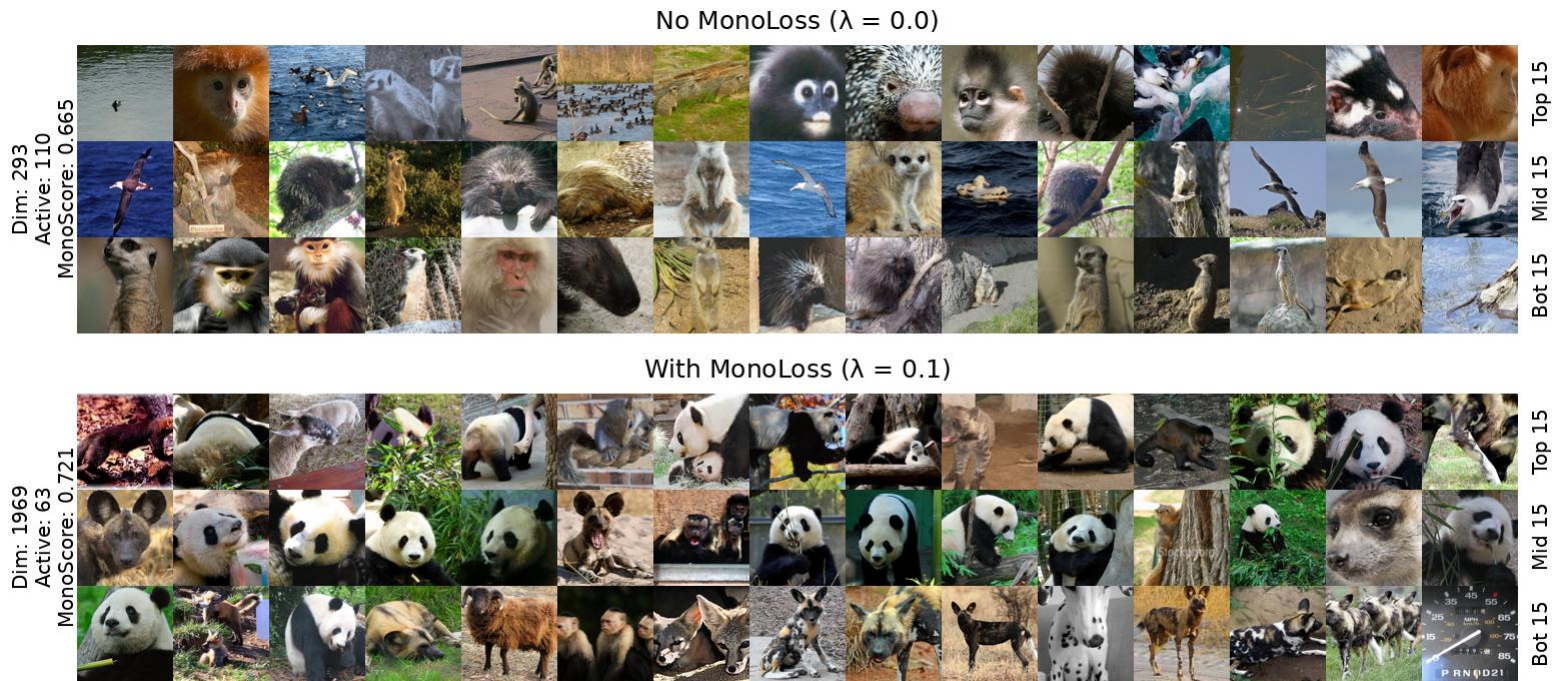}
        \caption{
        Comparison of activation patterns with and without monosemanticity loss for ResNet50 on ImageNet-1K from the same rank ($21$ out of $2048$). For each latent, we show top, middle, and bottom positively activated samples, ordered by activation strength from high to low. The top three rows finetuning without MonoLoss show much diverse categories (mix of different kind of animals), whereas the bottom three rows finetuning with MonoLoss present mostly pandas.
        }
        \label{fig:imagenet_rn50_20}
    \end{figure*}

    \begin{figure*}[ht]
        \centering
        \includegraphics[width=\textwidth]{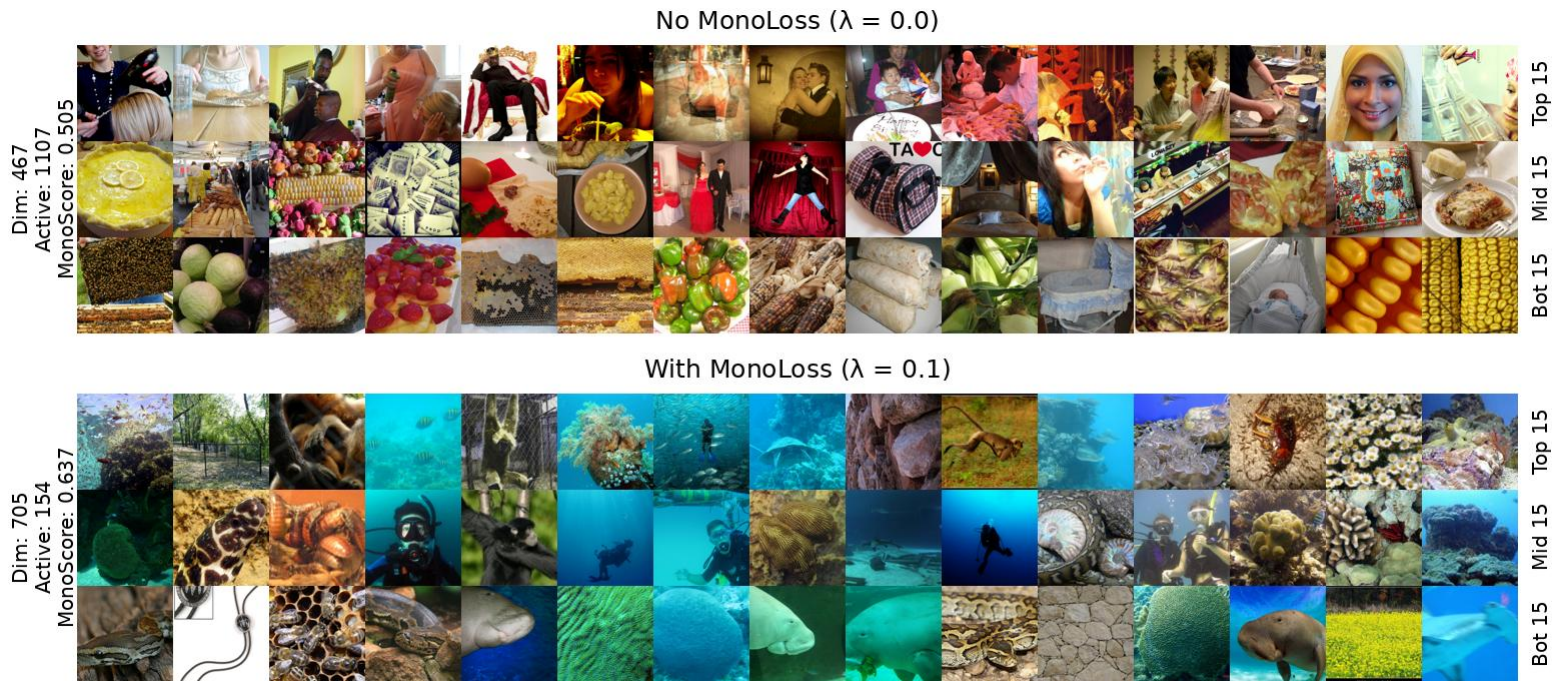}
        \caption{
        Comparison of activation patterns with and without monosemanticity loss for ResNet50 on ImageNet-1K from the same rank ($61$ out of $2048$). For each latent, we show top, middle, and bottom positively activated samples, ordered by activation strength from high to low. The top three rows finetuning without MonoLoss show much diverse categories (mix of human, objects, and vegetables), whereas the bottom three rows finetuning with MonoLoss present mostly concepts related to ocean.
        }
        \label{fig:imagenet_rn50_60}
    \end{figure*}

    \begin{figure*}[ht]
        \centering
        \includegraphics[width=\textwidth]{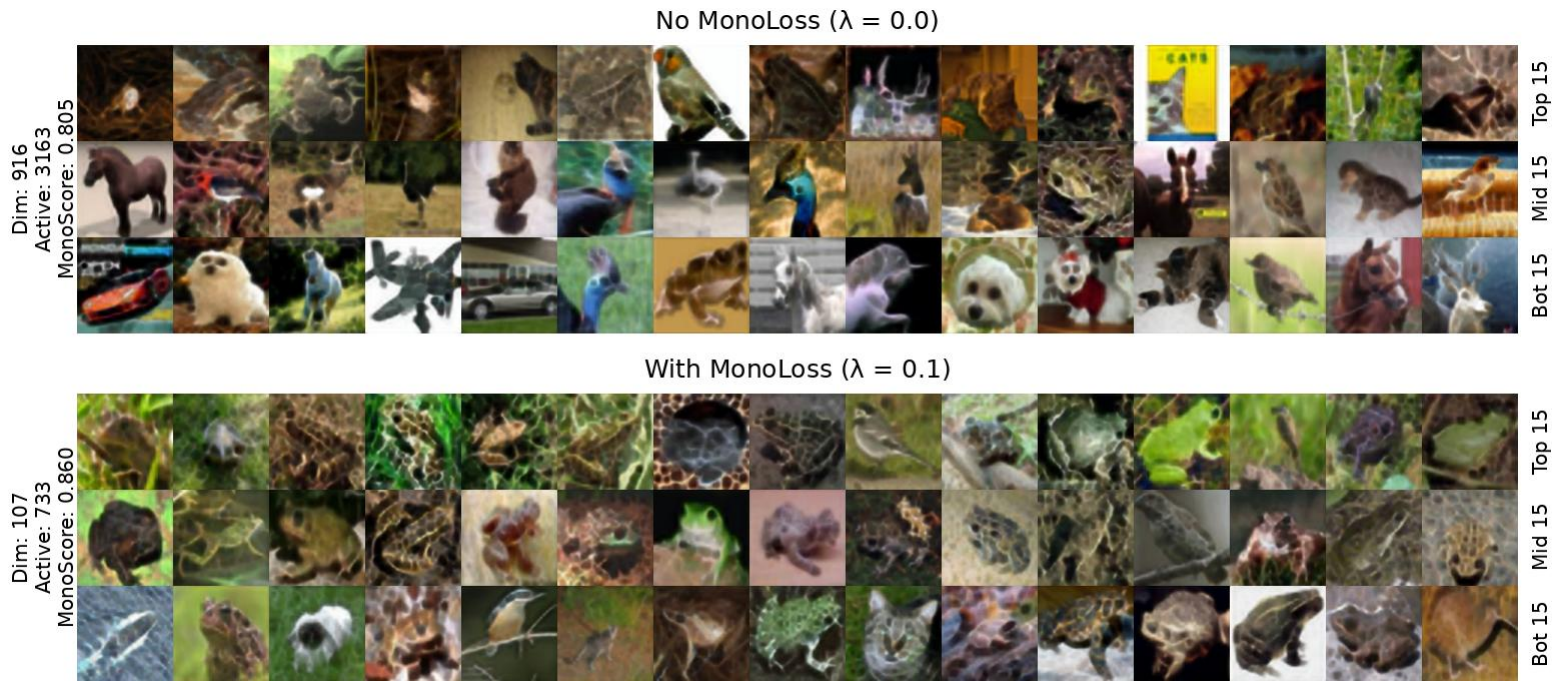}
        \caption{
        Comparison of activation patterns with and without monosemanticity loss for ResNet50 on CIFAR-100 from the same rank ($11$ out of $2048$). For each latent, we show top, middle, and bottom positively activated samples, ordered by activation strength from high to low. The top three rows finetuning without MonoLoss show much diverse categories (mix of cars, airplanes, dogs, cats,...), whereas the bottom three rows finetuning with MonoLoss present only frogs.
        }
        \label{fig:cifar100_rn50_10}
    \end{figure*}

    \begin{figure*}[ht]
        \centering
        \includegraphics[width=\textwidth]{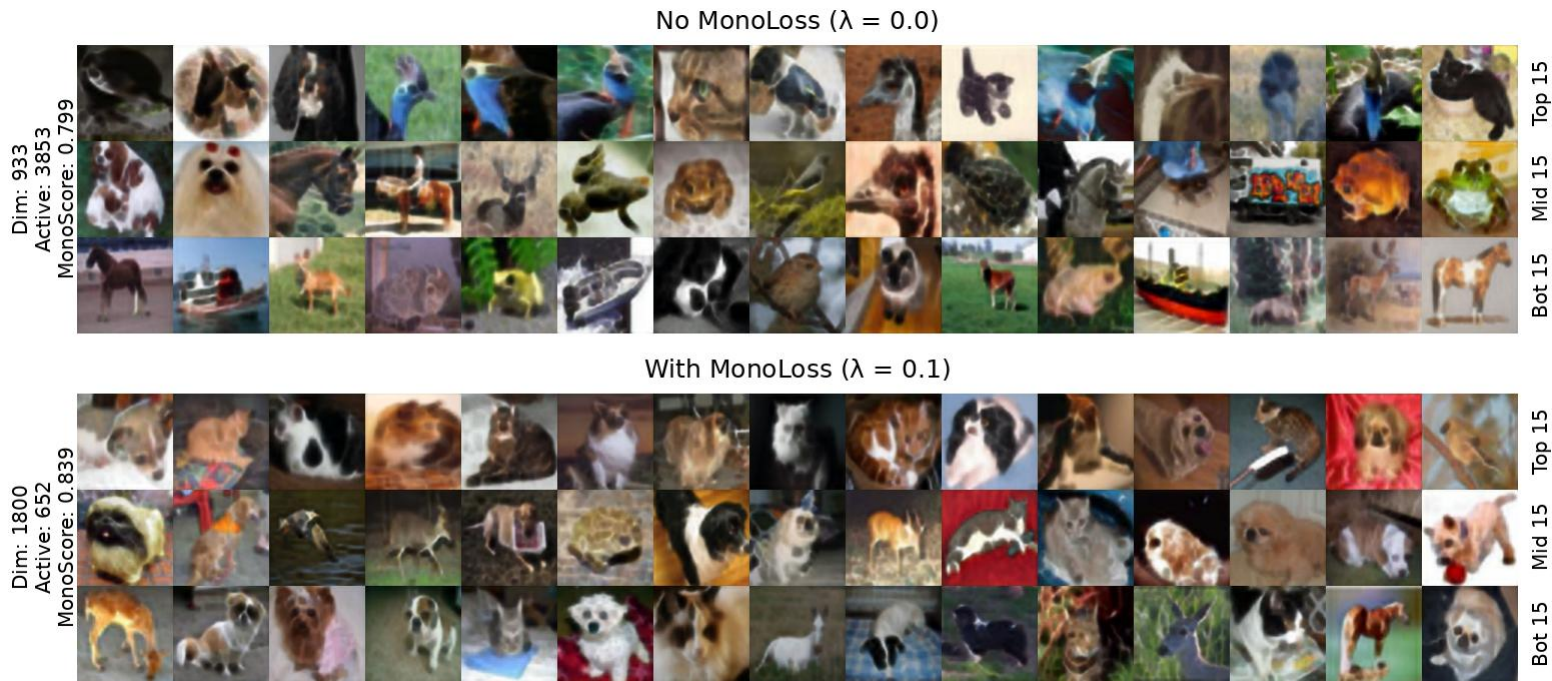}
        \caption{
        Comparison of activation patterns with and without monosemanticity loss for ResNet50 on CIFAR-100 from the same rank ($31$ out of $2048$). For each latent, we show top, middle, and bottom positively activated samples, ordered by activation strength from high to low. The top three rows finetuning without MonoLoss show much diverse categories (mix of birds, cats, dogs, frogs,...), whereas the bottom three rows finetuning with MonoLoss present mostly dogs.
        }
        \label{fig:cifar100_rn50_30}
    \end{figure*}

    \begin{figure*}[ht]
        \centering
        \includegraphics[width=\textwidth]{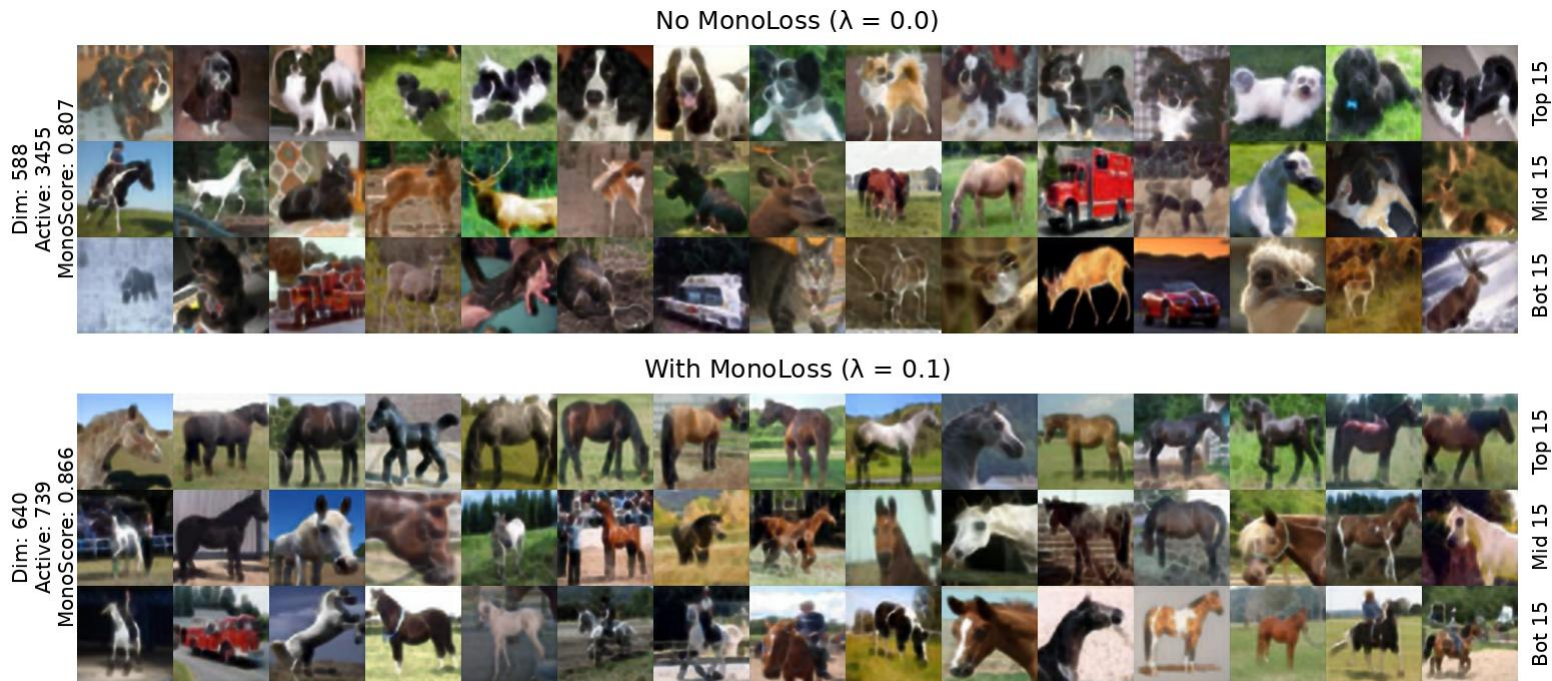}
        \caption{
        Comparison of activation patterns with and without monosemanticity loss for CLIP-ViT-B/32 on CIFAR-10 from the same rank ($21$ out of $768$). For each latent, we show top, middle, and bottom positively activated samples, ordered by activation strength from high to low. The top three rows finetuning without MonoLoss show much diverse categories (mix of dogs, horses, cars, deers,...), whereas the bottom three rows finetuning with MonoLoss present mostly horses.
        }
        \label{fig:cifar10_clip_vit_b_32_20}
    \end{figure*}

    \begin{figure*}[ht]
        \centering
        \includegraphics[width=\textwidth]{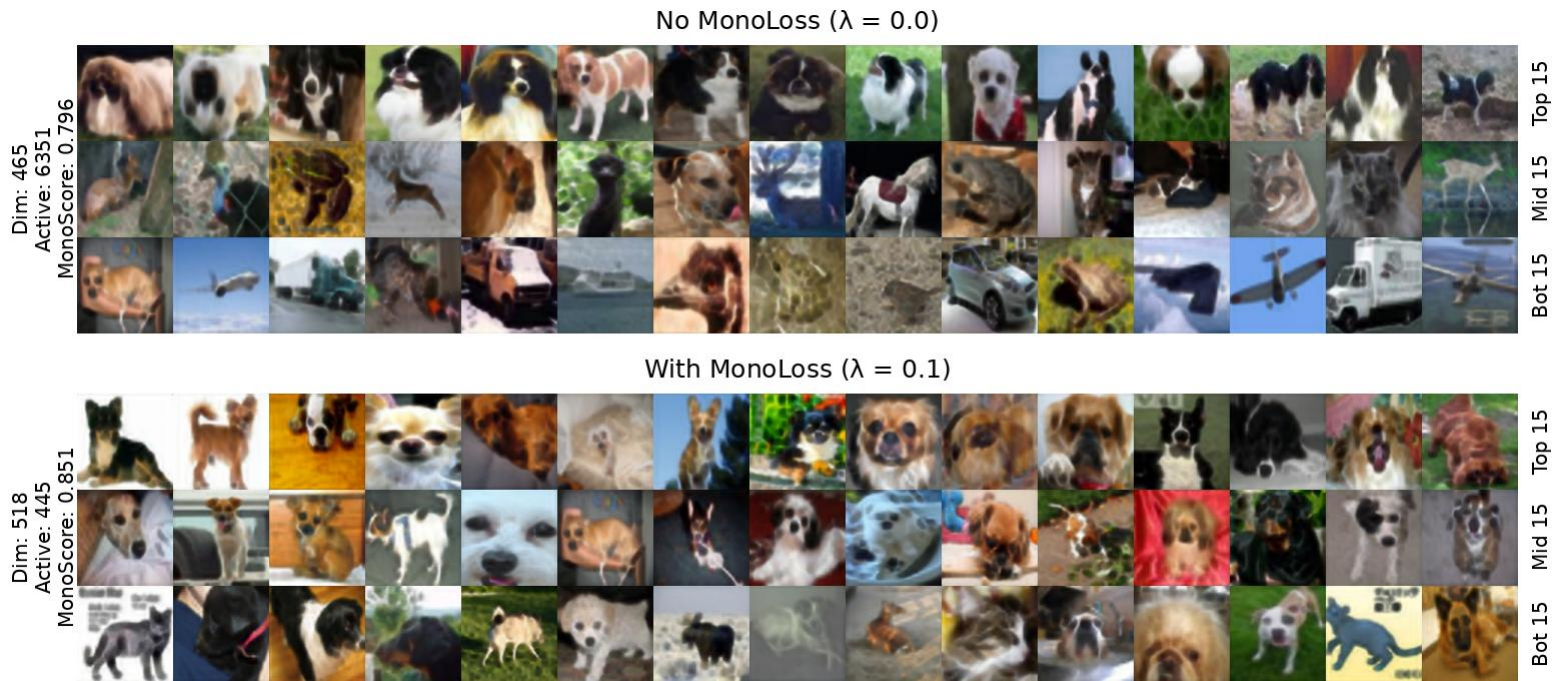}
        \caption{
        Comparison of activation patterns with and without monosemanticity loss for CLIP-ViT-B/32 on CIFAR-10 from the same rank ($61$ out of $768$). For each latent, we show top, middle, and bottom positively activated samples, ordered by activation strength from high to low. The top three rows finetuning without MonoLoss show much diverse categories (mix of dogs, cats, cars, birds,...), whereas the bottom three rows finetuning with MonoLoss present mostly dogs.
        }
        \label{fig:cifar10_clip_vit_b_32_60}
    \end{figure*}

    \begin{figure*}[ht]
        \centering
        \includegraphics[width=\textwidth]{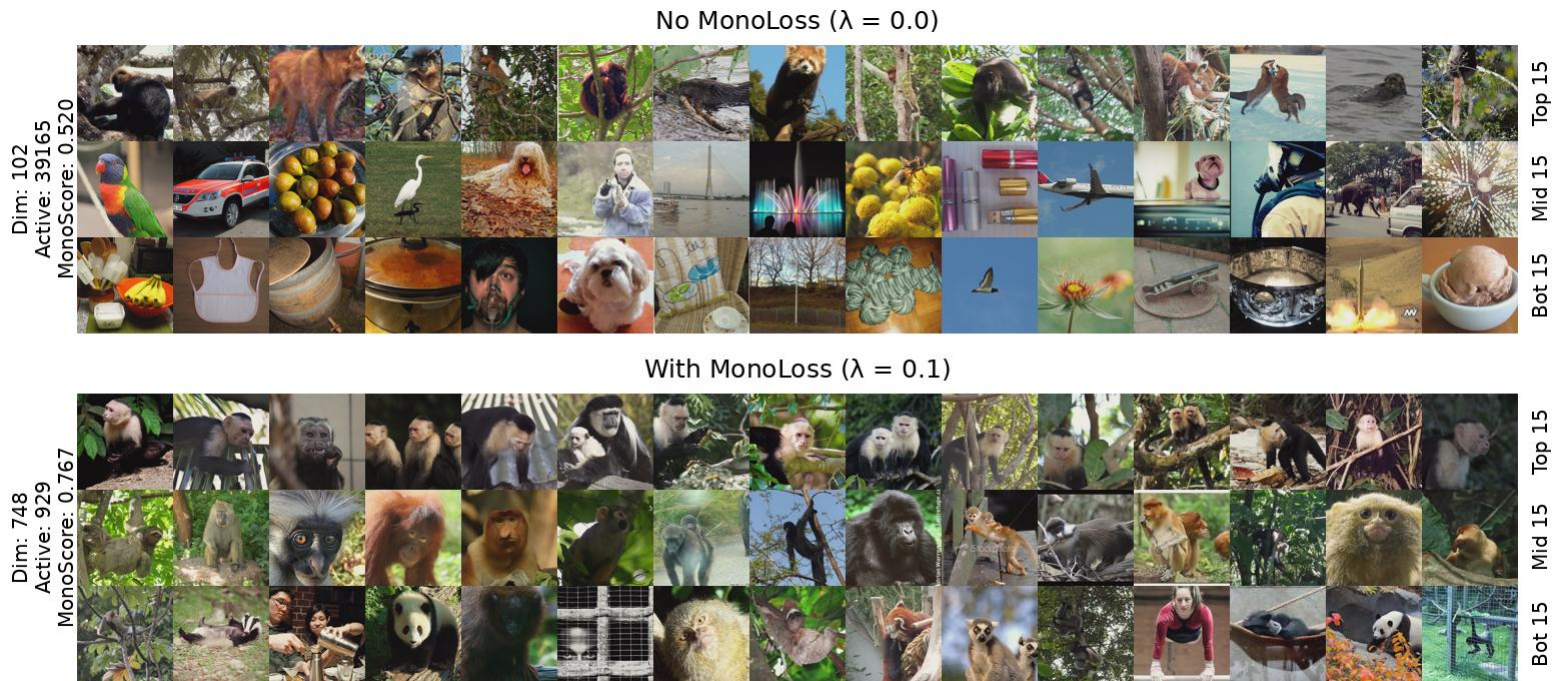}
        \caption{
        Comparison of activation patterns with and without monosemanticity loss for CLIP-ViT-B/32 on ImageNet-1K from the same rank ($1$ out of $768$). For each latent, we show top, middle, and bottom positively activated samples, ordered by activation strength from high to low. The top three rows finetuning without MonoLoss show much diverse categories (mix of objects, cars, human, animals...), whereas the bottom three rows finetuning with MonoLoss present mostly primates.
        }
        \label{fig:imagenet_clip_vit_b_32_0}
    \end{figure*}

    \begin{figure*}[ht]
        \centering
        \includegraphics[width=\textwidth]{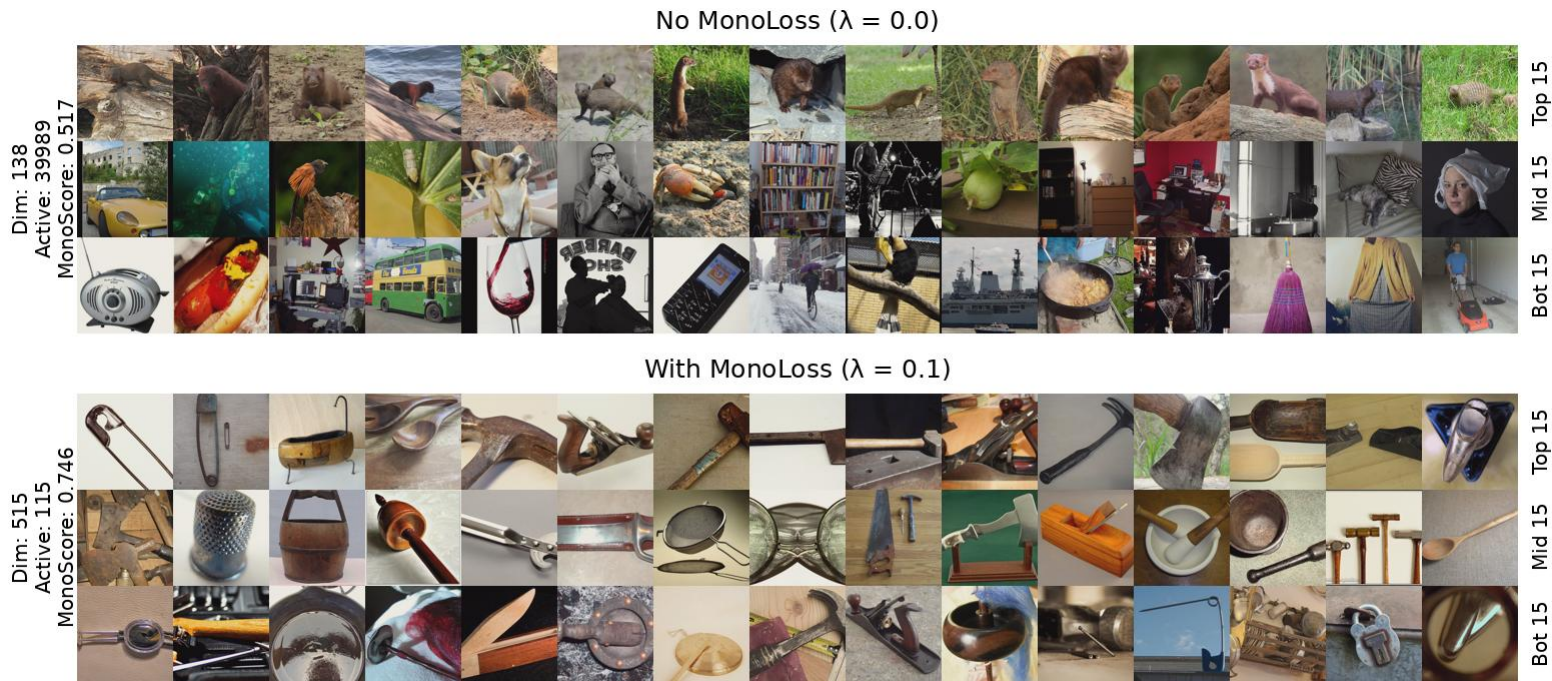}
        \caption{
        Comparison of activation patterns with and without monosemanticity loss for CLIP-ViT-B/32 on ImageNet-1K from the same rank ($2$ out of $768$). For each latent, we show top, middle, and bottom positively activated samples, ordered by activation strength from high to low. The top three rows finetuning without MonoLoss show much diverse categories (mix of animals, tools, human, dog,...), whereas the bottom three rows finetuning with MonoLoss present mostly tools.
        }
        \label{fig:imagenet_clip_vit_b_32_1}
    \end{figure*}

    \begin{figure*}[ht]
        \centering
        \includegraphics[width=\textwidth]{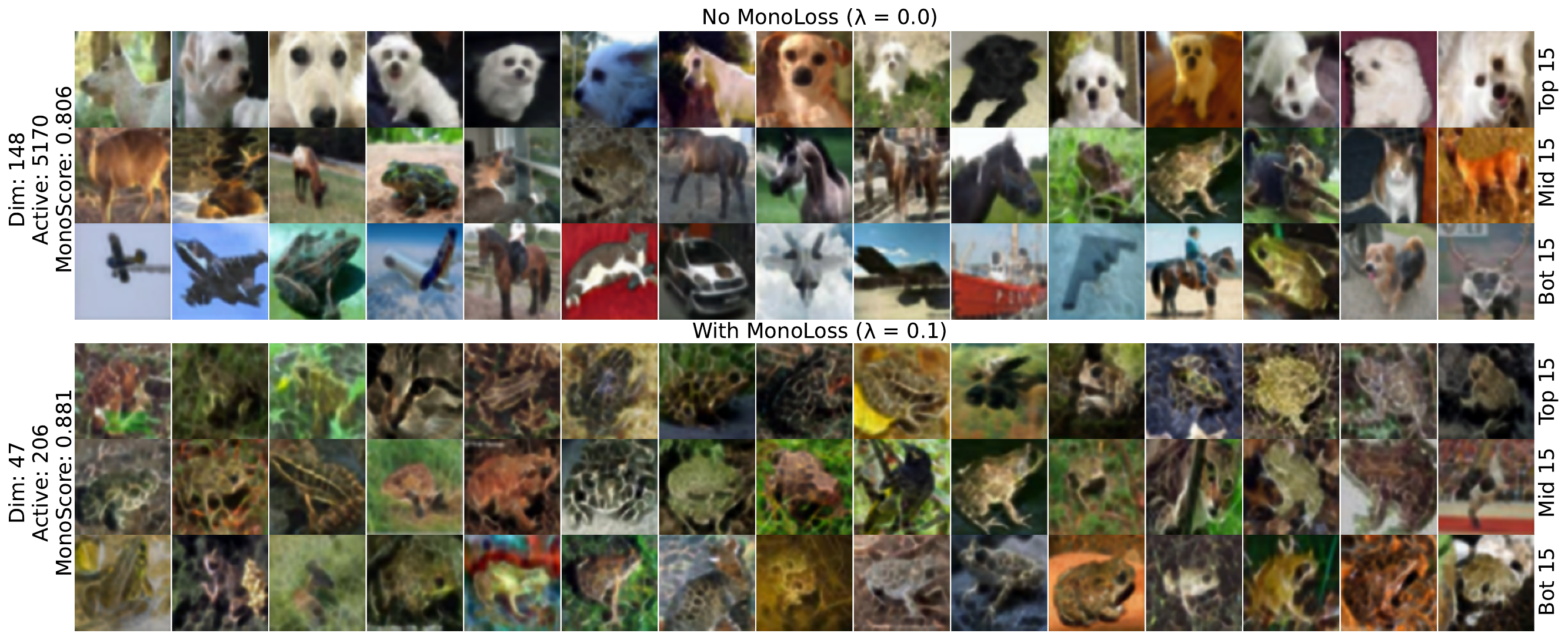}
        \caption{
        Comparison of activation patterns with and without monosemanticity loss for CLIP-ViT-B/32 on ImageNet-1K from the same rank ($2$ out of $768$). For each latent, we show top, middle, and bottom positively activated samples, ordered by activation strength from high to low. The top three rows finetuning without MonoLoss show much diverse categories (mix of dogs, airplanes, horses,...), whereas the bottom three rows finetuning with MonoLoss present frogs.
        }
        \label{fig:cifar100_clip_vit_b_32_1}
    \end{figure*}

    \begin{figure*}[ht]
        \centering
        \includegraphics[width=\textwidth]{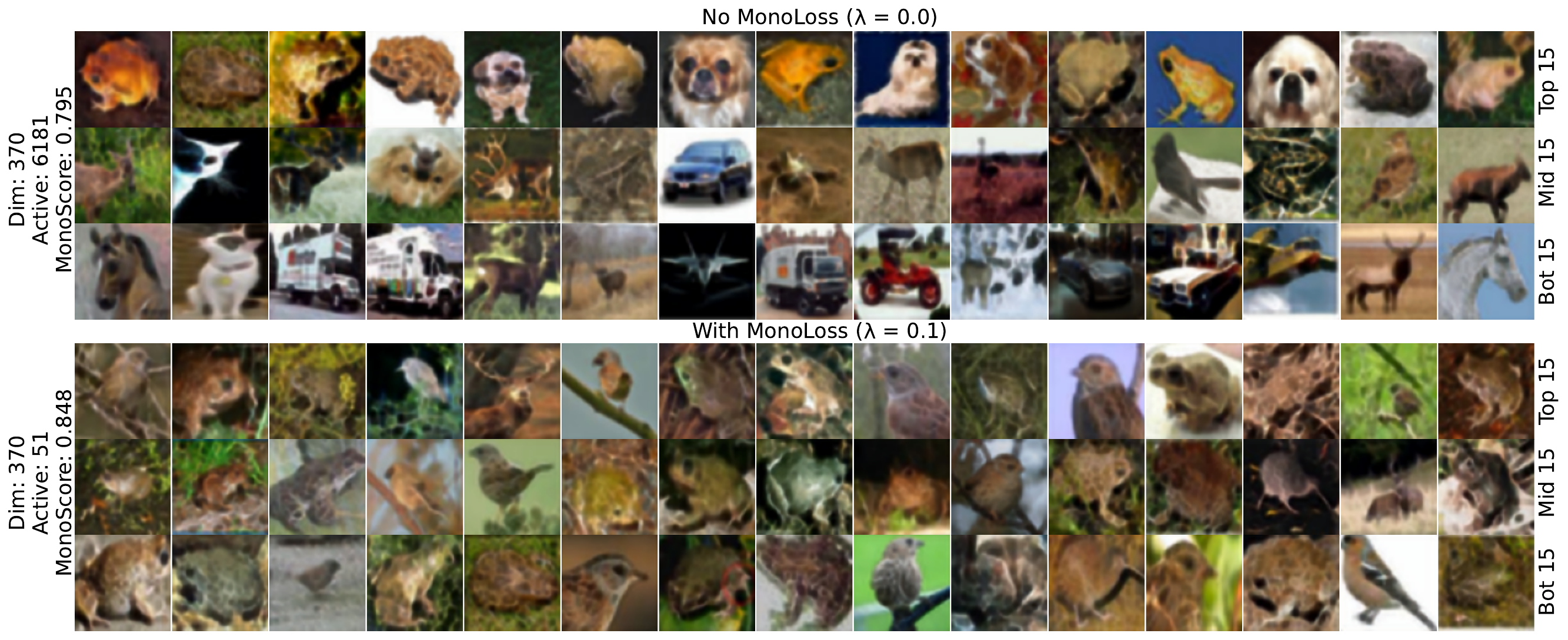}
        \caption{
        Comparison of activation patterns with and without monosemanticity loss for CLIP-ViT-B/32 on ImageNet-1K from the same rank ($8$ out of $768$). For each latent, we show top, middle, and bottom positively activated samples, ordered by activation strength from high to low. The top three rows finetuning without MonoLoss show much diverse categories (mix of frogs, dogs, deers, cars,...), whereas the bottom three rows finetuning with MonoLoss present mostly brown animals.
        }
        \label{fig:cifar100_clip_vit_b_32_7}
    \end{figure*}
    
\end{document}